\newcommand\hlbreakable[1]{#1}
\title{Entropy-based Guidance of Deep Neural Networks for Accelerated Convergence and Improved Performance}
\author{ \hspace{1mm}Mackenzie J.~Meni\\
        NEural TransmissionS (NETS) Lab\\
	Florida Institute of Technology\\
	\href{mailto:mmeni2021@my.fit.edu}{\texttt{mmeni2021@my.fit.edu}} \\
	\And
	\href{https://orcid.org/0000-0002-5524-629X}{\includegraphics[scale=0.06]{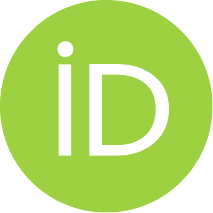}\hspace{1mm}Ryan T.~White} \\
        NEural TransmissionS (NETS) Lab\\
	Florida Institute of Technology\\
	\href{mailto:rwhite@fit.edu}{\texttt{rwhite@fit.edu}} \\
        \And
	\hspace{1mm}Michael L.~Mayo\\
        Environmental Laboratory \\
        U.S. Army Engineer Research and Development Center (ERDC) \\
        \href{mailto:Michael.L.Mayo@erdc.dren.mil}{\texttt{Michael.L.Mayo@erdc.dren.mil}} \\
        \And
	\hspace{1mm}Kevin R.~Pilkiewicz \\
        Environmental Laboratory \\
        U.S. Army Engineer Research and Development Center (ERDC) \\
        \href{mailto:Kevin.R.Pilkiewicz@usace.army.mil}{\texttt{Kevin.R.Pilkiewicz@usace.army.mil}} \\
}
\date{}
\theoremstyle{plain}
\newtheorem{theorem}{Theorem}
\newtheorem{corollary}[theorem]{Corollary}
\theoremstyle{definition}
\newtheorem{definition}[theorem]{Definition}
\newtheorem{example}[theorem]{Example}
\begin{document}

\maketitle

\begin{abstract}
    Neural networks have dramatically increased our capacity to learn from large, high-dimensional datasets across innumerable disciplines. However, their decisions are not easily interpretable, their computational costs are high, and building and training them are not straightforward processes. To add structure to these efforts, we derive new mathematical results to efficiently measure the changes in entropy as fully-connected and convolutional neural networks process data. By measuring the change in entropy as networks process data effectively, patterns critical to a well-performing network can be visualized and identified. Entropy-based loss terms are developed to improve dense and convolutional model accuracy and efficiency by promoting the ideal entropy patterns. Experiments in image compression, image classification, and image segmentation on benchmark datasets demonstrate these losses guide neural networks to learn rich latent data representations in fewer dimensions, converge in fewer training epochs, and achieve higher accuracy.
    \end{abstract}

\section{Introduction}

In the past 15 years, neural networks have revolutionized our capabilities in computer vision using convolutional neural networks (CNNs), for example, seminal works by \citet{lecun_gradient-based_1998} and \citet{krizhevsky_imagenet_2012}, and vision Transformers from \citet{dosovitskiy_image_2021}, natural language processing (e.g., by \citet{radford_improving_2018, ouyang_training_2022}) with Transformers from \citet{vaswani2017attention}, synthetic data generation with generative adversarial networks (GANs) from \citet{goodfellow_generative_2014} and diffusion models introduced by \citet{ho_denoising_2020}, and innumerable other domains.

Neural networks have become essential tools in numerous fields, ranging from healthcare to national security, due to their exceptional predictive power and versatility. However, the opaque nature of these models poses significant challenges, particularly in high-stakes environments where understanding and trust in how decisions are made are crucial. Developing more interpretable neural network models could revolutionize their use, providing clear, actionable insights into why and how certain decisions are made. This clarity could enhance the efficiency of the models, improve collaboration with domain experts, and deepen our understanding of critical features and their impacts.

Moreover, the complexity of neural networks complicates their deployment and explanation. These models often engage with high-dimensional data sets, require the optimization of millions, if not billions, of parameters, and operate through expansive architectures that process data in intricate ways during both training and inference phases. This complexity not only demands substantial computational resources but also makes it exceedingly challenging to elucidate the models’ decision-making processes. Enhancing the interpretability of neural networks could lead to more trustworthy applications, particularly in fields where the stakes are exceptionally high and decisions have profound implications on human lives. By addressing these challenges, we could unlock a new level of utility and reliability in neural network applications, ensuring they meet the rigorous demands of critical use cases.

Information theory demonstrates the potential to provide probabilistic explanations of how neural networks process data and make decisions. This can allow us to interpret model decisions, get more from smaller neural architectures to reduce data processing complexity, encourage better generalization, and optimize them more efficiently. In addition, analysis of patterns in information flow through neural networks provides a pathway to understand what trends during training and/or inference promote good performance. This leads to information-theoretic metrics to consider and an array of hyperparameters to encourage ideal behavior.

Prior works have exploited entropy and mutual information in deep learning for preprocessing data, training neural networks, regularizing the training, and interpreting their decisions. These have mostly focused estimate information theoretic quantities like entropy of neural network outputs or latent distributions, or mutual information between latent representations and other representations or outputs to enable pursing downstream benefits like increased performance, efficiency, or interpretability.  However, estimating these quantities directly for latent representations yields many challenges due to high dimensionality and resulting unreliable estimations and/or high computational costs. This hinders their practicality in many environments.

This article seeks to sidestep these challenges by only deriving explicit formulas for the the change in entropy from one hidden layer to the next within a neural network. We first derive novel probabilistic results for measuring entropy propagation through critical structures within neural networks: fully-connected and 2D convolutional layers. These formulas are used to create entropy-based loss functions. We then use these results to guide the training of neural networks, leading to better convergence times and model performance. This yields a practical entropy-based approach that yield numerous benefits across diverse computer vision tasks and neural architectures.

The main contributions of this work include:

\begin{itemize}
    \item Development of tractable formulas for entropy propagation through dense and convolutional layers and corresponding novel entropy-based loss terms to enable entropy-based guidance of neural network training. 
    \item An empirical analysis of entropy propagation patterns through well-trained supervised neural networks, providing an understanding of ideal information flows, and optimal entropy-based guidance strategy.
    \item Experimental demonstrations that entropy-based loss terms speed up convergence in image compression on MNIST and CIFAR10, and improve image classification performance on CIFAR10 with statistical validation.
    \item \hlbreakable{Large-scale experimental demonstrations of improved performance and convergence on large models, such as VGG-16, ResNet, U-Net, with larger, more realistic benchmark datasets.}
\end{itemize}

The remainder of the paper is structured as follows. Section 2 outlines some related works on information-theoretic learning. In section 3, we derive probabilistic results for the propagation of entropy through dense and convolutional neural network layers. Section 4 adapts these results into novel entropy-based loss terms for guiding neural network training. Section 5 discusses experimental results in applying the entropy-based loss terms for training neural networks for image reconstruction, classification, \hlbreakable{and segmentation} tasks on standard benchmark image datasets. Section 6 provides concluding remarks on the article's findings, contributions, and implications for future work.

\section{Related Work}

\textbf{Information Theoretic Learning.}  \citet{principe_learning_2000} address the idea of information-theoretic learning as a way of decoding how machines learn from data. In this work, they discuss a possible framework utilizing R\'{e}nyi’s quadratic entropy, \hlbreakable{computed in the discrete case as}
\begin{align}
    H_2(X)=-\log\left(\sum\limits_i f(x_i)^2\right)
\end{align}
\hlbreakable{for a random variable $X$ with probability mass function $f$. This is used to develop an $O(N^2)$ entropy estimator, which is used train linear or nonlinear mappers with the goal of entropy minimization or maximization.}

\citet{erdogmus_online_2003} later propose a stochastic entropy estimator based on Parzen window estimates of the probability density function (pdf). This enables a stochastic gradient that allows entropy optimization through gradient-based optimization. These are foundational works enabling potential entropy manipulation, although their computational complexity hindered the practical applicability in real-time or resource-constrained environments.

\hlbreakable{\textbf{Entropy Estimation.} Estimating the entropy of outputs or latent representations of large deep learning models is challenging due to the complexity of models and high dimensionality of their hidden representations.}

\hlbreakable{Many classical methods exist for estimating entropy, including simple plug-in estimates, and nearest neighbors estimates. See \citet{beirlant1997nonparametric} for an overview. However, they have limitations preventing use for our problem. Plug-in estimates such as \citet{dmitriev1974estimation} require density estimation and/or numerical integration, which is challenging in high dimensions. Histogram-based estimation from \citet{gyorfi1987density} reduce computational costs, but not for high dimensional cases. Sample spacing estimates as in \citet{tarasenko1968evaluation} are limited to univariate entropy. Nearest neighbors techniques presented early by \citet{bickel1983sums} struggle in high dimensions due to a lack of sufficient density of sampled points.}

\hlbreakable{Notably, \citet{gabrie2018entropy} tackle this problem by using hueristics from statistical physics to estimate entropies and mutual information between layers of a class of feedforward neural networks, including nonlinearities, arbitrary weidth and depth, and correlated input data. It does, however, require some constraints on the weight matrices not realistic in practice. \citet{dong2023understanding} use the kernelized R\'{e}nyi's entropy as a more tractable way to measure information theoretic quantities in high-dimensional settings in deep learning. The authors derive rigorous and tight generalization error bounds and demonstrate empirical agreement with experiment. These bounds, however, which are not of use for regularization or entropy-based guidance, as our work does.}

\textbf{Information Theoretic Losses.} More recently, many additional methods have been suggested in the utilization of information theory to decode algorithmic decisions and processes. Specific to deep learning models, information-theoretic loss functions have been a promising approach. Such loss functions include cross-entropy, F-divergence, mutual information (MI) losses, Kullback–Leibler (KL) divergence, and others derived from information theory.

\hlbreakable{Relative entropy (i.e. KL divergence) is used by \citet{QM_kldiv_2020} as a base loss function the penalizes deviations of predicted distributions from the ground truth labels across all training examples, in the discrete case as:}
\begin{align}
    \mathcal{L}(f,\hat{f})=-\sum\limits_{i=1}^d \sigma\left(f_i\right)\log\left(\frac{\sigma(\hat{f}_i)}{\sigma(f_i)}\right)
\end{align}
\hlbreakable{for a single ground truth target vector $f$ and prediction $\hat{f}$, where and $\sigma$ is the softmax function. This enables them to calibrate rescaled output data for the prediction of extreme events and anomalous features, particularly extreme events. This approach was extended by \citet{rudy2023output} to an analogous loss with a softmax-like operator applied to an entire batch of data examples. This enables a better matching between predictions and the sampling distribution and primes the model to predict extreme events, although it requires probability density estimation. For quite a different use case, \citet{Liu_RDE_2020} use a relative entropy loss while approximating densities by histograms, setting state-of-the-art marks in several benchmark image retrieval tasks.}

In another direction, InfoMax, short for Information Maximization, is a learning principle that aims to maximize the mutual information between the input data and some learned representation, typically a latent variable, as explored by  \citet{linsker_application_1988}. The core idea is to design models that capture and retain as much information as possible from the input data in the learned representations, with the assumption that this information will be useful for downstream tasks like classification or generation. \citet{hjelm_learning_2019} introduced Deep InfoMax (DIM) to estimate and maximize the MI between the input data and the latent representations. However, this was shown to result in excessive and noisy information by \citet{yu_information-theoretic_2021}. While DIM may lead to representations that capture more information from the input data, it might not guarantee that the learned representations are interpretable or meaningful to humans. 

\textbf{Information Bottleneck.} One suggested method to mitigate such issues is the Information Bottleneck (IB) principle proposed by \citet{tishby_information_1999}. The IB is an information-theoretic framework for learning representations in a way that balances the amount of information captured from the input data with its relevance to a target output. The main focus of IB is to extract a compressed and informative representation from the input data retaining only the essential information needed to predict the target output, as applied to deep networks by \citet{yu_deep_2021}. For example, \citet{xu_q-detr_2023} recently used a ``teacher'' object detection Transformer model and distill knowledge into an efficient ``student'' quantized Transformer model. They use an IB-inspired idea to take alternating training steps to minimize the entropy of the student's latent representations conditioned on the teacher's while also maximizing the (unconditional) entropy of the student queries. \hlbreakable{While effective in such cases, IB assumes that the input-output relationship in the data can be accurately captured by a single target variable. This assumption might not hold true for all real-world scenarios where complex relationships exist between multiple variables.}

\hlbreakable{\textbf{Feature Engineering and Understanding.} Further works in information theoretic learning focus on feature engineering and understanding. For example, \citet{lhermitte2022deep} and \citet{hayashi2023image} explore entropy-based preprocessing and feature extraction from RGB images for downstream tasks. \citet{Gajowniczek_GEL_2020} use R\'{e}nyi's entropy for sensitivity analysis and to evaluate feature importance in deep learning. \citet{amanova2024finding} locate input features that reduce entropy of neural network predictions, which enables targeted perturbations that explain the importance of different features. \citet{finnegan2017maximum} uses maximum entropy methods to extract learned features from a trained neural network. In analyzing DNA sequences, they can identify and extract biologically significant learned features. Further, \citet{ozdenizci2021stochastic} introduces a training procedure using stochastic estimates of mutual information gradients for feature selection. The authors train a corresponding dimension-reducing neural network that projects high-dimensional features onto a lower dimensional feature space where the data representations have maximum mutual information with their class labels.}

\textbf{Generative AI.} Information theory has also been incorporated into generative artificial intelligence (AI). \citet{vincent_stacked_2010} show minimizing an autoencoder's reconstruction error coincides with maximizing a lower bound on information, similar to InfoMax. They pair this with a denoising criterion to learn latent representations from which the decoder then generates synthetic data. InfoGAN, introduced by \citet{chen_2016_infogan}, enhances the original GAN loss with a term that maximizes a lower bound for the mutual information between a small subset of the latent variables and the observation. Unlike standard GANs, this provides interpretable latent variables that can be manipulated to generate synthetic data with specific properties (e.g., synthetic MNIST digits from specified classes or with specific rotation or boldness). These efforts demonstrate information-theoretic losses can enable rich, interpretable latent representations that avoid mode collapse for generative neural networks, suggesting their application in supervised domains. Variational autoencoders in \citet{kingma_auto-encoding_2013} take a Bayesian view and considers prior distribution and noisy posterior distribution for the latent representations. They create a loss that minimizes reconstruction error while encouraging high entropy of the posterior and penalizing cross-entropy between the distributions. This results in latent representations that account for greater variation in the inputs, avoiding mode collapse. 

\hlbreakable{\textbf{Entropy Regularization in Deep Reinforcement Learning.} In reinforcement learning, an agent takes initially random actions from an action space and is trained to take favorable actions through a reward and penalty structure. Modern deep reinforcement learning makes use of neural networks infer decisions based on input data. Entropy regularization is commonly used (e.g. in \citet{mnih2016asynchronous, haarnoja2018soft}) to control the exploration of action spaces--high entropy means the agent has more freedom to explore, low entropy means the agent is more constrained. Additionally, work by \citet{han2023entropy} demonstrates entropy regularization improves performance when used in optimizing policies--the agent's way of behaving when presented with a state and an environment. \citet{hao2022entropy} demonstrated such entropy-based control even for challenging multi-agent reinforcement learning problems where there are multiple agents simultaneously learning. Their automatic regularization makes the multi-agent learning more stable while ensuring exploration of the action space by the agents.}

\textbf{RLHF and ChatGPT.} Reinforcement learning with human feedback (RLHF) allows reward learning where the rewards are defined by human judgment and has been used in recent years to fine-tune language models to generate better synthetic text. Recent work by \citet{stiennon_learning_2020} use human preference labels on text generated by GPT-2 to develop a reward model and train a policy that generates higher-quality text as judged by human preferences. In the reward model, an entropy bonus is implemented to encourage the policy to explore the parameter space widely and avoid collapsing modes. The InstructGPT model of \citet{ouyang_training_2022} incorporated this approach into GPT-3.5. Here, information-theoretic loss terms permit meaningfully constrained explorations of the latent space, though the scale of these models (13-175B parameters) results in an extremely high-dimensional latent space that is difficult to interpret.

\section{Probabilistic Results}

In this section, we state some definitions and results from the information-theoretic literature. We then establish several new results that allow the application of these ideas to fully-connected feedforward neural networks (i.e. multilayer perceptrons or MLPs) and CNNs. These results enable tracking the evolution of entropy of data as it passes through a neural network.

First, we provide a definition of entropy for use in analyzing neural networks.

\begin{definition}
Shannon's (joint) differential entropy of a continuous random variable $X$ valued in $\mathbb{R}^d$ with joint probability density function (pdf) $f$ is
\begin{align}
H(X)=\mathbb{E}\left[-\log f(X)\right].
\end{align}
\end{definition}

Thus, $H$ provides a metric for the average information content of a random variable, encoding the extent of its accessible states. Unless otherwise noted, we will use the word \textit{entropy} to refer to joint differentiable entropy, and we use natural logarithms throughout the article (i.e., the entropy being expressed in units of nats). While a pdf, $f$, of the random variable $X$ is assumed to exist, it is not assumed to be known. We almost never know the pdf of high-dimensional datasets for computer vision, natural language processing (NLP), or other domains where deep neural networks are effective.

We will represent dense and 2D convolutional neural network layers as matrix-vector products with invertible, constant matrices multiplying random input data or latent representations of that data. Next we present a formula for the entropy propagation of a matrix-vector product $WX$, where $X$ is a random vector and $W$ is a constant matrix. This permits efficient estimation of the change in entropy as data propagates through the dense and convolutional layers of a neural network.

\begin{theorem}
\thlabel{entropy_matrix_mult}(\citet{Cover2006}, Corollary to Theorem 8.6.4) Let $X$ be a random variable valued in $\mathbb{R}^d$ and constant $W\in\mathbb{R}^{d\times d}$. If $W$ is invertible, then the entropy of $WX$ is
\begin{align}
H(WX)=H(X)+\log\left(\left|\det W\right|\right).
\end{align}
\end{theorem}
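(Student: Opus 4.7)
The plan is to derive the result from the multivariate change of variables formula for probability density functions. Since $W$ is invertible, the map $x\mapsto Wx$ is a diffeomorphism of $\mathbb{R}^d$, and if $X$ has pdf $f_X$, then $Y=WX$ has pdf $f_Y(y)=f_X(W^{-1}y)/|\det W|$. I would begin by writing out this relationship explicitly and substituting it into the definition of differential entropy $H(Y)=-\int f_Y(y)\log f_Y(y)\,dy$.

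Next, I would perform the change of variables $x=W^{-1}y$ in the resulting integral. The Jacobian of this substitution contributes a factor of $|\det W|$ to $dy$, which cancels the $1/|\det W|$ prefactor in $f_Y$, leaving the integrand $f_X(x)\bigl[\log f_X(x)-\log|\det W|\bigr]$. Splitting the integral into two pieces, the first piece reproduces $H(X)=-\int f_X(x)\log f_X(x)\,dx$, and the second piece gives $\log|\det W|\cdot\int f_X(x)\,dx=\log|\det W|$, since $f_X$ integrates to one.

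Combining these two pieces yields $H(WX)=H(X)+\log|\det W|$, as claimed. The only place where care is required is in justifying the change of variables rigorously, which relies on invertibility of $W$ (ensuring the Jacobian is nonzero) and on the standard assumption that $f_X$ is integrable so that Fubini-type manipulations are valid; neither of these presents a genuine obstacle, since the theorem is stated under the hypothesis that $W$ is invertible and $X$ admits a pdf. Because this is a cited corollary from Cover and Thomas, the main task is really just to recall the change of variables argument rather than to overcome any substantive difficulty.
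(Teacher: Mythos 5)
Your change-of-variables argument is correct and is precisely the standard proof of this result: the paper itself offers no proof, since it cites the statement as a corollary to Theorem 8.6.4 of \citet{Cover2006}, and the proof given there is exactly the substitution $f_Y(y)=f_X(W^{-1}y)/|\det W|$ followed by the Jacobian cancellation you describe. Nothing is missing beyond the (correctly noted) standing assumptions that $X$ admits a pdf and $H(X)$ is well-defined, so there is no gap to close.
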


However, weight matrices in dense layers may not be invertible or even square, so it is unclear how to use this result to measure entropy propagation. Even worse, convolutions are typically not considered a matrix-vector product at all. The next two subsections offer remedies to these issues.

\subsection{Dense Layers}

The following theorem is a novel result that computes the entropy of pre-activation values $W'X$ as the (unknown) input entropy of $X$ plus an easily computable term.
\begin{theorem}\thlabel{DenseEntropyProp}
    Suppose $X:\Omega\to\mathbb{R}^{d\times N}$ ($N>d$) is a random matrix and $W\in\mathbb{R}^{\min(d,m)\times \min(d,m)}$ is an invertible matrix. Then,
    \begin{align}
        H(W'X)=H(X)+\log\left(\left|\det W\right|\right)
    \end{align}
    where
    \begin{align}\label{SquarifyMatrix}
    W'=\begin{cases}
    \begin{pmatrix}
        W & W_{N\times (d-N)} \\
        0 & I_{d-N}
    \end{pmatrix}, & \text{if }N<d\\
    W, & \text{if }N=d\\
    \begin{pmatrix}
        W & 0\\
        W_{(N-d)\times d} & I_{N-d}
    \end{pmatrix},&\text{if }N>d
    \end{cases}
    \end{align}
\end{theorem}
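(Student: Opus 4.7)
The plan is to reduce \thref{DenseEntropyProp} to the square invertible case handled by \thref{entropy_matrix_mult}. I would treat the three cases $N<d$, $N=d$, and $N>d$ in the definition of $W'$ separately but with a unified strategy. The case $N=d$ is immediate: $W'=W$ is square and invertible, so \thref{entropy_matrix_mult} applies directly and yields the claimed formula.

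For the cases $N<d$ and $N>d$, the central observation is that $W'$ is block-triangular with $W$ and an identity matrix as its diagonal blocks (upper block-triangular when $N<d$, lower block-triangular when $N>d$). By the block-triangular determinant formula,
\begin{align*}
\det W' \;=\; \det W \,\cdot\, \det I_{|N-d|} \;=\; \det W,
\end{align*}
so $W'$ is invertible whenever $W$ is, and $\left|\det W'\right|=\left|\det W\right|$. I would next argue that $W'X$, viewed as a linear transformation of $X$ after appropriately reshaping or augmenting $X$ to align with the square block structure of $W'$, leaves the coordinates corresponding to the identity block unchanged while applying $W$ to the remaining coordinates. The off-diagonal block of $W'$ contributes only a deterministic translation that depends on the fixed coordinates; since translations preserve Shannon differential entropy, conditioning on the identity-block coordinates collapses the effect of $W'$ to an invertible linear map with Jacobian determinant $\det W$.

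With this reduction in place, \thref{entropy_matrix_mult} yields
\begin{align*}
H(W'X) \;=\; H(X) + \log\left|\det W'\right| \;=\; H(X) + \log\left|\det W\right|,
\end{align*}
which is exactly the claimed identity.

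The main obstacle I anticipate is bookkeeping rather than mathematical depth: one must carefully specify how $W'$, whose size is $\max(d,N)\times\max(d,N)$, acts on the $d\times N$ random matrix $X$ in each case, and verify that the off-diagonal block of $W'$ contributes only an entropy-preserving shift rather than a genuine source of randomness. This amounts to separating the deterministic from the stochastic parts in the block decomposition and invoking the standard translation-invariance of differential entropy, after which the square-matrix theorem does all the work.
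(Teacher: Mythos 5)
Your proposal is correct and takes essentially the same route as the paper: both proofs observe that $W'$ is block triangular (or equal to $W$) with diagonal blocks $W$ and an identity, conclude $\det W' = \det W \neq 0$ so that $W'$ is invertible, and then apply \thref{entropy_matrix_mult} directly to the square matrix $W'$. Your middle paragraph about deterministic translations and conditioning on the identity-block coordinates is superfluous---once $W'$ is known to be square and invertible, \thref{entropy_matrix_mult} applies to $W'X$ with no further decomposition, which is exactly how the paper (and your own final display) concludes.
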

\begin{proof}
We refer to $W$ as the square part of $W'$. Note $W'$ is block upper diagonal, square, or block lower diagonal, depending on the input dimension $d$ and output dimension $N$. In any of the three cases,
\begin{align*}
\det\left(W'\right) = \det\left(W\right).
\end{align*}

Since $W$ is invertible, $\det\left(W'\right)=\det\left(W\right)>0$, so $W'$ is invertible. Then, \thref{entropy_matrix_mult} implies
\begin{align*}
    H(W'X)=H(X)+\log\left(\left|\det W'\right|\right)=H(X)+\log\left(\left|\det W\right|\right).
\end{align*}
\end{proof}

This formula clarifies how entropy propagates from input to pre-activations within dense layers.

\begin{example}
If a weight matrix has more columns than rows ($3=N<d=5$ in this case), \thref{DenseEntropyProp} suggests modifying it as:
\begin{align*}
\text{Weight Matrix}=\begin{pmatrix}
    3 & 0 & 9 & -3 & 4\\
    1 & 5 & -1 & 4 & 2\\
    0 & 4 & -2 & 1 & 5\\
\end{pmatrix}
    \to \left(
        \begin{array}{ccc:cc}
            3 & 0 & 9 & -3 & 4\\
            1 & 5 & -1 & 4 & 2\\
            0 & 4 & -2 & 1 & 5\\ \hdashline
            0 & 0 & 0 & 1 & 0\\
            0 & 0 & 0 & 0 & 1
        \end{array}
    \right)=\begin{pmatrix}
        W & W_{3\times 2}\\
        0 & I_{2}
    \end{pmatrix}=W',
\end{align*}
where $W$ is the square part of the weight matrix, $W_{3\times 2}$ is the remaining portion of the weight matrix, and $I_2$ is a $2\times2$ identity block.

A similar manipulation occurs when there are more rows than columns, with a block of zeros placed on the upper right and identity block placed on the lower right.
\end{example}

\subsection{2D Convolutions}

Consider an $p$-by-$q$ convolutional filter $C$. If $C$ convolves with strides $1\times 1$ over an input image $X\in\mathbb{R}^{l\times w}$, then an output pre-activation map $Z = C*X$ is defined as
\begin{align}
Z_{ij}=\sum\limits_{k=0}^{p}\sum\limits_{l=0}^{q}C_{kl}X_{i+k,j+l}
\end{align}
for $i=0,1,...,l-p$ and $j=0,1,..., w-q$. While this is typically envisioned as the 2D convolutional filter $C$ scanning over the image $X$ to extract features denoted, the operation can alternatively be represented as a matrix-vector product. We will establish $Z_F=C_MX_F$, where $X_F$ and $Z_F$ are flattened versions of $X$ and $Z$ and $C_M$ has a special structure constructed below.

First, we transpose the rows $x_i^T$ of matrix $X\in\mathbb{R}^{l\times w}$ into column vectors and then concatenate them into a single column vector $X_F\in\mathbb{R}^{lw}$:
\begin{align}
    X_F=\texttt{flatten}(X)=\begin{pmatrix}
        x_1\\
        \vdots\\
        x_l
    \end{pmatrix}_{lw\times 1}
\end{align}

In addition, we will consider an arbitrary $p\times q$ convolutional filter $C$ made up of rows $c_1^T$, ..., $c_p^T$. With these pieces, we show a numerical example to motivate the path to manipulating a 2D convolution operation into a matrix-vector product.

\begin{example}
    Suppose we have a small gray-scale image $X\in\mathbb{R}^{4\times 4}$ of dimensions a convolution $C\in\mathbb{R}^{3\times 2}$ with the resulting feature map $Z=C*X$:
    \begin{align}
        X&=\begin{pmatrix}
            3 & 4 & 1 & 2\\
            0 & 0 & 5 & 6\\
            2 & 1 & 0 & 3\\
            1 & 4 & 2 & 5
        \end{pmatrix}
        & C=\begin{pmatrix}
            2 & 1 \\
            4 & 3 \\
            -2 & 1            
        \end{pmatrix}
        && Z=C*X=\begin{pmatrix}
            7 & 22 & 45\\
            13 & 3 & 26
        \end{pmatrix}
    \end{align}
Alternatively, the operation can be defined as am matrix-vector product $Z_F=C_MX_F$ resulting in a flattened version of the pre-activation $Z$, where
\begin{align}
    C_MX_F &=
    \left(
        \begin{array}{cccc:cccc:cccc:cccc}
            2 & 1 & 0 & 0 & 4 & 3 & 0 & 0 & -2 & 1 & 0 & 0 & 0 & 0 & 0 & 0\\
            0 & 2 & 1 & 0 & 0 & 4 & 3 & 0 & 0 & -2 & 1 & 0 & 0 & 0 & 0 & 0\\
            0 & 0 & 2 & 1 & 0 & 0 & 4 & 3 & 0 & 0 & -2 & 1 & 0 & 0 & 0 & 0\\ \hdashline
            0 & 0 & 0 & 0 & 2 & 1 & 0 & 0 & 4 & 3 & 0 & 0 & -2 & 1 & 0 & 0\\
            0 & 0 & 0 & 0 & 0 & 2 & 1 & 0 & 0 & 4 & 3 & 0 & 0 & -2 & 1 & 0\\
            0 & 0 & 0 & 0 & 0 & 0 & 2 & 1 & 0 & 0 & 4 & 3 & 0 & 0 & -2 & 1
        \end{array}
    \right)
    \begin{pmatrix}
        3\\
        4\\
        1\\
        2\\ \hdashline
        0\\
        0\\
        5\\
        6\\ \hdashline
        2\\
        1\\
        0\\
        3\\ \hdashline
        1\\
        4\\
        2\\
        5
    \end{pmatrix}
    =\begin{pmatrix}
        7\\
        22\\
        54\\ \hdashline
        13\\
        3\\
        26
    \end{pmatrix} = Z_F
\end{align}
Annotating the blocks, this simplifies as
\begin{align}
        C_MX_F &= \begin{pmatrix}
        B_1 & B_2 & B_3 & 0\\
        0 & B_1 & B_2 & B_3
    \end{pmatrix}
    \begin{pmatrix}
        x_1\\
        x_2\\
        x_3\\
        x_4
    \end{pmatrix}=\begin{pmatrix}
        z_1\\
        z_2
    \end{pmatrix} = Z_F
\end{align}

Reshaping $Z_F$ to $2\times 3$, we reconstruct $Z$ as follows.
\begin{align}
    Z = \texttt{reshape}\left(Z_F\right)=\begin{pmatrix}
        z_1^T\\
        z_2^T
    \end{pmatrix}
    = \begin{pmatrix}
            7 & 22 & 45\\
            13 & 3 & 26
    \end{pmatrix}.
\end{align}

Next, use of \thref{DenseEntropyProp} to measure entropy propagation through a multiplication by a constant matrix $C_M$ requires us to construct a square version of $C_M$. We do this by adding identity matrices on the lower right of each block $B_j$ as per \eqref{SquarifyMatrix} in the case where $N<d$ ($I_1$ in this case). Then, we do the same with $C_M$ consisting of the square $B_j'$ blocks, appending zero blocks and identity matrices ($I_8$ in this case). This results in the following.
\begin{align}
    C_M' &= \begin{pmatrix}
        B_1' & B_2' & B_3' & 0\\
        0 & B_1' & B_2' & B_3'\\
        0 & 0 & I_4 & 0\\
        0 & 0 & 0 & I_4
    \end{pmatrix}\notag
    \\&=\left(
        \begin{array}{cccc:cccc:cccc:cccc}
            2 & 1 & 0 & 0 & 4 & 3 & 0 & 0 & -2 & 1 & 0 & 0 & 0 & 0 & 0 & 0\\
            0 & 2 & 1 & 0 & 0 & 4 & 3 & 0 & 0 & -2 & 1 & 0 & 0 & 0 & 0 & 0\\
            0 & 0 & 2 & 1 & 0 & 0 & 4 & 3 & 0 & 0 & -2 & 1 & 0 & 0 & 0 & 0\\
            0 & 0 & 0 & 1 & 0 & 0 & 0 & 1 & 0 & 0 & 0 & 1 & 0 & 0 & 0 & 0\\
            \hdashline
            0 & 0 & 0 & 0 & 2 & 1 & 0 & 0 & 4 & 3 & 0 & 0 & -2 & 1 & 0 & 0\\
            0 & 0 & 0 & 0 & 0 & 2 & 1 & 0 & 0 & 4 & 3 & 0 & 0 & -2 & 1 & 0\\
            0 & 0 & 0 & 0 & 0 & 0 & 2 & 1 & 0 & 0 & 4 & 3 & 0 & 0 & -2 &   1\\
            0 & 0 & 0 & 0 & 0 & 0 & 0 & 1 & 0 & 0 & 0 & 1 & 0 & 0 & 0 & 1\\
            \hdashline
            0 & 0 & 0 & 0 & 0 & 0 & 0 & 0 & 1 & 0 & 0 & 0 & 0 & 0 & 0 & 0\\
            0 & 0 & 0 & 0 & 0 & 0 & 0 & 0 & 0 & 1 & 0 & 0 & 0 & 0 & 0 & 0\\
            0 & 0 & 0 & 0 & 0 & 0 & 0 & 0 & 0 & 0 & 1 & 0 & 0 & 0 & 0 & 0\\
            0 & 0 & 0 & 0 & 0 & 0 & 0 & 0 & 0 & 0 & 0 & 1 & 0 & 0 & 0 & 0\\
            \hdashline
            0 & 0 & 0 & 0 & 0 & 0 & 0 & 0 & 0 & 0 & 0 & 0 & 1 & 0 & 0 & 0\\
            0 & 0 & 0 & 0 & 0 & 0 & 0 & 0 & 0 & 0 & 0 & 0 & 0 & 1 & 0 & 0\\
            0 & 0 & 0 & 0 & 0 & 0 & 0 & 0 & 0 & 0 & 0 & 0 & 0 & 0 & 1 & 0\\
            0 & 0 & 0 & 0 & 0 & 0 & 0 & 0 & 0 & 0 & 0 & 0 & 0 & 0 & 0 & 1\\
        \end{array}
    \right)
\end{align}
Since $C_M'$ is an upper diagonal square matrix, its determinant is the product of its diagonal elements. In general, it will be equal to $c_{11}$ raised to the power of the number of rows of blocks (2) times the number of rows per block (3), where $c_{11}$ is the upper left term in the convolutional filter $C$. Here, $\det C_M' = c_{11}^{2\cdot 3}=2^6=64$. This determinant allows use of \thref{entropy_matrix_mult} to measure entropy propagation.
\hfill\qed
\end{example}

Generalizing the pattern observed in the example, we can construct the matrix $C_M$ as
\begin{align}
    C_M=\begin{pmatrix}
        B_1 & B_2 & B_3 & \cdots & B_p & 0 & 0 & \cdots & 0\\
        0 & B_1 & B_2 & \cdots & B_{p-1} & R_p & 0 & \cdots & 0\\
        0 & 0 & B_1 & \cdots & B_{p-2} & B_{p-1} & B_p & \cdots & 0\\
        \vdots & \vdots & \vdots & \ddots & \vdots & \vdots & \vdots & \ddots & \vdots\\ 
        0 & 0 & 0 & \cdots & B_1 & B_2 & B_3 & \cdots & B_p
    \end{pmatrix}
    =\begin{pmatrix}
        C_M^s & C_M^r
    \end{pmatrix},
\end{align}
where
\begin{align}
    B_j=\begin{pmatrix}
        c_{j1} & c_{j2} & c_{j3} & \cdots & c_{jq} & 0 & 0 & \cdots & 0\\
        0 & c_{j1} & c_{j2} & \cdots & c_{j,q-1} & c_{jq} & 0 & \cdots & 0\\
        0 & 0 & c_{j1} & \cdots & c_{j,q-2} & c_{j,q-1} & c_{jq} & \cdots & 0\\
        \vdots & \vdots & \vdots & \ddots & \vdots & \vdots & \vdots & \ddots & \vdots\\
        0 & 0 & 0 & \cdots & c_{j1} & c_{j2} & c_{j3} & \cdots & c_{jq}
    \end{pmatrix}
    =\begin{pmatrix}
        B_j^s & B_j^r
    \end{pmatrix}.
\end{align}
The matrix $C_M$ and the blocks $B_j$ both have more columns (width) than rows (length). In both cases, a superscript of $s$ indicates the largest square submatrix made by taking columns 1 through the length of the matrix. For example, the square portion $B_j^s$ is the first $w-q+1$ columns of $B_j$ and $B_j^r$ is the remaining $q-1$ columns (the rectangular portion). In other words, $C_M$ is a $p$-diagonal block Toeplitz matrix with blocks $B_1$, ..., $B_p$, which are $q$-diagonal Toeplitz matrices.

Similar to \thref{DenseEntropyProp}, we convert $C_M$ into a square version whose determinant is determined entirely by the elements of $C_M$. First, the blocks are adjusted to $w\times w$ as
\begin{align}
    B_j'&=\begin{pmatrix}
        B_j^s & B_j^r \\
        0 & I_{q-1}
    \end{pmatrix},
\end{align}
Then, we have the resulting $lw\times lw$ matrix $C_M'$,
\begin{align}
    C_M'&=\begin{pmatrix}
        C_{sq}^s & C_{sq}^r\\
        0 & I_{(p-1)w}
    \end{pmatrix}
\end{align}

The output pre-activation map $Z$ is now obtained via the matrix-vector product $C_M'X_C$ if we then reshape the result into shape $(l-p+1) \times (w-q+1)$ by transposing each successive components into rows. Thus, applying a 2D convolution is equivalent to matrix-vector products, just like dense layers. This gives the following corollary to the previous theorem.


\begin{corollary}\thlabel{ConvEntropyProp}
    Suppose $X:\Omega\to\mathbb{R}^{l\times w}$ is a random matrix with rows $X_1^T, ..., X_l^T$ and $C\in\mathbb{R}^{p\times q}$ is a fixed convolutional filter, then
    \begin{align}
        H(C*X) &= H(X) + \log\left(\left|\det C_M'\right|\right)\notag
        \\&= H(X) + (w-q+1)(l-p+1)\log\left(\left|c_{11}\right|\right),
    \end{align}
    where $c_{11}$ is the upper left element of the convolutional filter.
\end{corollary}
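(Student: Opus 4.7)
The plan is to lift the convolution $C*X$ to the matrix-vector product $C_M X_F$ with the block Toeplitz matrix $C_M$ constructed in Section~3.2, then use the squarified version $C_M'$ so that Theorem~\thref{entropy_matrix_mult} applies to the invertible $C_M'$. The corollary then reduces to a single determinant computation which can be collapsed by repeatedly using the block-triangular structure.

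First, I would note that flattening $X$ into $X_F$ and reshaping $C_M X_F$ back into $Z$ are measurable bijections, so $H(C*X) = H(C_M X_F)$. Since $C_M$ is rectangular (more columns than rows), I invoke the squarification construction from Theorem~\thref{DenseEntropyProp}: each $B_j$ of size $(w-q+1)\times w$ is padded to the $w\times w$ block $B_j'$ by an appended identity $I_{q-1}$, and the resulting block Toeplitz matrix $C_{sq}$ of shape $(l-p+1)w\times lw$ is further padded to the $lw\times lw$ invertible matrix $C_M'$ by an identity $I_{(p-1)w}$ in the lower-right corner. Theorem~\thref{DenseEntropyProp} then yields
\begin{align*}
H(C*X) = H(X) + \log\bigl|\det C_M'\bigr|,
\end{align*}
so the problem reduces to evaluating $\det C_M'$.

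The second step peels off three nested layers of (block) triangular structure. Because $C_M'$ is block upper triangular with $I_{(p-1)w}$ in its lower-right block, $\det C_M' = \det C_{sq}^s$. The square submatrix $C_{sq}^s$ is the first $(l-p+1)w$ columns of $C_{sq}$, and the Toeplitz shift pattern forces this submatrix to be block upper triangular with all $l-p+1$ diagonal blocks equal to $B_1'$, giving $\det C_{sq}^s = (\det B_1')^{l-p+1}$. A third application of the same trick gives $\det B_1' = \det B_1^s$, and $B_1^s$---the first $w-q+1$ columns of the Toeplitz matrix $B_1$---is genuinely upper triangular with $c_{11}$ on its diagonal, so $\det B_1^s = c_{11}^{w-q+1}$. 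Combining the three factorizations yields $\det C_M' = c_{11}^{(w-q+1)(l-p+1)}$, which gives the stated formula upon taking $\log|\cdot|$.

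The main obstacle is bookkeeping rather than anything conceptual: one must verify that truncating $C_{sq}$ to its first $l-p+1$ block columns really does produce a block upper triangular matrix whose diagonal blocks are all $B_1'$ (and similarly for $B_1^s$ inside $B_1$), which depends on how the Toeplitz shift interacts with $p$, $q$, $l$, and $w$. Once that is confirmed, the determinants telescope cleanly and the identity $\log|\det C_M'| = (w-q+1)(l-p+1)\log|c_{11}|$ is immediate, completing the proof.
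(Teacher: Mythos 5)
Your proposal is correct and follows essentially the same route as the paper: lift the convolution to the block Toeplitz product $Z_F = C_M X_F$, squarify to $C_M'$ by appending identity blocks, apply \thref{DenseEntropyProp} (hence \thref{entropy_matrix_mult}) to the invertible $C_M'$, and evaluate $\det C_M' = c_{11}^{(w-q+1)(l-p+1)}$ from the triangular structure. The only cosmetic difference is that you peel the determinant in three nested block-triangular stages, whereas the paper observes directly that $C_M'$ is globally upper triangular---its diagonal consists of $c_{11}$ repeated $(w-q+1)(l-p+1)$ times together with $1$'s from the identity padding---so the determinant is immediately the product of the diagonal entries.
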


Hence, the change in entropy in applying a single 2D convolutional filter is proportional to the difference between the filter and image width, difference between filter and image length, and the logarithm of the magnitude of the first weight in the filter.

\section{Entropy-Based Guidance of Dense and Convolutional Neural Networks}

The prior section established formulas for the entropy of hidden representations within dense and convolutional neural networks. In each case, the entropy is unknown, but the change in entropy can be computed using known parameters and hyperparameters.

The article proposes to guide the training of neural networks to produce ideal entropy propagation patterns. This provides a new lens through which models can be constructed, trained, and tuned. In addition, we provide two distinct tools to guide the training by controlling entropy propagation through:
\begin{enumerate}
    \item Dense layers via a loss using determinants of modified weight matrices $W'$.
    \item 2D convolutional layers via a loss using determinants of modified convolutional operations using $C_M'$.
\end{enumerate}

These new loss terms will be used to construct a compound loss function:
\begin{align}
    L(\mathcal{C},\mathcal{W}) = L_{\text{acc}}(\mathcal{C},\mathcal{W}) + \lambda_1L_{\text{dense}}(\mathcal{W}) + \lambda_2L_{\text{conv}}(\mathcal{C}),
\end{align}\label{compound_loss}
where $\mathcal{C}$ consists of the 2D convolutional filters, and $\mathcal{W}$ consists of the weight matrices of the dense layers.

$L_\text{acc}$ is a standard loss measuring error of the model's primary task; for example, mean squared error (MSE) for regression or cross-entropy for classification. $L_\text{dense}$, and $L_\text{conv}$ are the entropy-based loss terms from dense and convolutional layers, respectively. The hyperparameters $\lambda_1$ and $\lambda_2$ control the strengths of the loss terms relative to the primary loss $L_\text{acc}$. The specific formulas and variations for each entropy-based loss term is established in the following three subsections. Note the losses below generalize this to accommodate fine-grained tuning with layer-specific and channel-specific hyperparameters $\lambda_1^\ell$ (for dense layer $\ell$) and $\lambda_2^{\ell d}$ (for filter $d$ in convolutional layer $\ell$).

\subsection{Dense Entropy Loss}

From \thref{DenseEntropyProp}, we determined how entropy changes as dense layers process data. It is established that the term
\begin{align}
    \log\left(\left|\det W\right|\right) 
\end{align}
describes the change in entropy as a dense layer with weight matrix with square part $W$. Suppose a dense network has layers with weight matrices with square parts $W_1$, $W_2$, ..., $W_L$. Then, an entropy-based loss term is:
\begin{align}
    L_\text{dense}(\mathcal{W})=-\sum\limits_\ell \lambda_1^{\ell}\log\left(\left|\det W_\ell\right|\right)
\end{align}
where $\lambda_1^{\ell}\in\mathbb{R}$ are hyperparameters controlling the strength of the penalty in the ${\ell}^\text{th}$ dense layer, generalizing the single $\lambda_1$ hyperparameter if losses are applied layer-wise. When $\lambda_1^\ell>0$, the added loss term encourages weight matrices to reduce entropy. This provides the opportunity to minimize loss further or faster, and opens the door for smaller, more efficient architectures.
    
\subsection{2D Convolutional Entropy Loss}

According to \thref{ConvEntropyProp}, the change in entropy as an input is processed by a 2D convolution is
\begin{align}
    (l-p+1)(w-q+1)\log |c_{11}|,
\end{align}
so we introduce a loss term proportional to $\log |c_{11}|$ for each convolutional filter.

Suppose the term $c_{ij}^{\ell d}$ corresponds to the element in position $(i,j)$ of the $d$th convolutional filter of the $\ell$th convolutional layer. Then, the entropy-based loss term is
\begin{align}
    L_\text{conv}(\mathcal{C})=-\sum\limits_{\ell,d} \lambda_2^{\ell d}\log\left(\left|c_{11}^{\ell d}\right|\right)
\end{align}
where $\lambda_2^{\ell d}\in\mathbb{R}$ are layer- and channel-wise weighting hyperparameters, generalizing the simpler global $\lambda_2$ from \eqref{compound_loss}. If the $\lambda_2^{\ell d}>0$ decreases in entropy will be penalized, resulting in entropy amplification/preservation. Similarly, $\lambda_2^{\ell d}<0$ results in entropy suppression.

Note that we are considering all convolutions as convolving over single-channel 2D inputs. However, 2D convolutional filters running over multi-channel inputs (e.g. RGB images) are equivalent to several parallel 2D convolutions, or convolutions over different channels. Hence, the formula above works regardless of the number of channels of the input.

\section{Experiments}

We first carried out some qualitative analysis of entropy propagation patterns in well-trained, effective neural networks.

\begin{figure}[H]
    \centering
    \includegraphics[width=\textwidth]{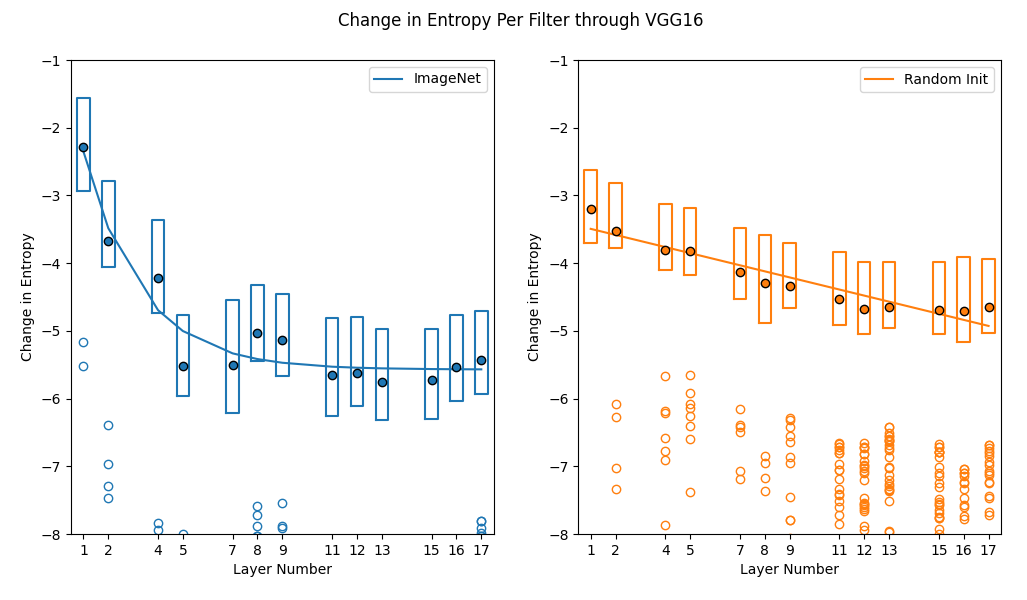}
    \caption{Average change in entropy per filter in each layer of two VGG16 networks, one trained to classify ImageNet and one randomly initialized. The closed dots are means, box plots show first and third quartiles of entropy change per filter at each layer, and outliers are plotted as open dots.}\label{VGG_entropy_gain}
\end{figure}

The left panel of Figure~\ref{VGG_entropy_gain} displays the change in entropy per filter at each layer within a VGG16 CNN well-trained to classify the ImageNet dataset from \citet{deng_imagenet_2009}. It preserves most of its entropy in its early convolutional layers, with mean entropy drops of only around -2.2 in the first layer. The entropy then drops precipitously as data propagates through the later convolutional layers. The entropy drops magnitudes drop exponentially and settle near -5.5 per filter.  In contrast, the right panel shows that the randomly initialized (untrained) network fails to preserve early entropy as strongly and experiences entropy drops that increase only linearly in magnitude in later layers.

There is also an interesting pattern in the outlier filters. The ImageNet-trained network has few outliers, all of which correspond to larger drops in entropy than the norm. They are rare especially later in the network. In contrast, the randomly initialized network has far more outliers, again all corresponding to reductions in entropy. Further the outliers are especially common in the later layers. These trends suggest well-trained networks learn filters reduce entropy more uniformly across filters.

These pattern was observed across multiple well-trained neural networks, and hence we hypothesize penalizing entropy decay in early layers and encouraging entropy drops in later layers would promote better performance.

\subsection{Experimental Modification of Entropy-based Losses}

The dense entropy-based loss term includes $-\log(|\det W|)$, which approaches infinity exponentially if $|\det W|$ approaches 0. To avoid exploding gradients, weights tend to be small, resulting in frequent tiny determinants. Even worse, the convolutional entropy-based loss term is a sum of $-\log\left(|c_{11}^{\ell d}|\right)$ terms, which approach infinity if even a single $c_{11}^{\ell d}$ approaches 0.

To sidestep this issue, we frequently substitute the entropy-based loss terms above with similar functions that are more stable as follows. For the dense case:
\begin{align}
    \frac{1}{|\det W|+\varepsilon}
\end{align}
and for the convolutional case:
\begin{align}
    \frac{1}{|c_{11}^{\ell d}|+\varepsilon}
\end{align}

Figure~\ref{modified_loss} shows the curves behave similarly near 0, but the loss terms modified with $\varepsilon$ do not explode, even with the tiny determinants and $|c_{11}|$ values seen during training. \footnote{Note $\varepsilon=0.5$ is too large in practice and we use $\varepsilon<10^{-3}$, but it was chosen to make the plot more easily visible.} Though the curves have different signs for inputs outside $[-1,1]$, these values are quite uncommon to ensure the stability of backpropagation.

\begin{figure}[H]
    \centering
    \includegraphics[width=0.9\textwidth]{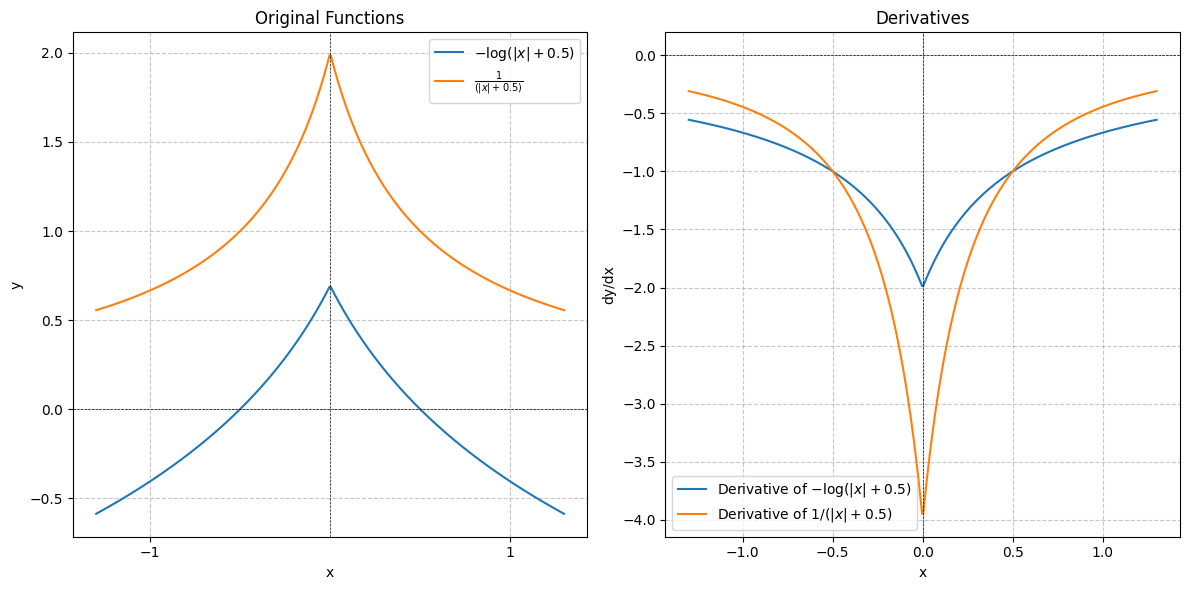}
    \caption{This plot shows the comparison of $-\log(|x|)$ versus $\frac{1}{|x|}$.}
    \label{modified_loss}
\end{figure}

In addition, $\frac{1}{|x|}$ has a larger derivative near 0 than $\log(|x|)$. Note in the first panel of Figure~\ref{modified_loss}, $\frac{1}{|x|=\varepsilon}$ approaches its maximum at $x=0$ more steeply. The second panel shows its absolute derivative more extreme in this location. This amplifies the gradients computed during training to help avoid the vanishing gradient problem, encouraging quicker convergence to better minima.

\subsection{Dense Autoencoders for Image Compression}

To test the efficacy of the entropy-based loss function for dense layers, we trained simple autoencoders with different values of $\lambda_1$ and hidden dimensions for image compression using two benchmark datasets: MNIST from \citet{lecun2010mnist} and CIFAR-10 from \citet{Krizhevsky2009LearningML}. The validation MSE $L_\text{acc}$ (i.e., the reconstruction error) and number of iterations required for convergence were then compared.

The autoencoder includes an input layer, one dense hidden layer, and an output layer. It maps the input data into a latent space, passes this latent representation through an activation function, and then reconstructs the input data with another dense layer. The overall goal is to reduce dimensionality of the input data by maximizing the total variation of that latent space for effective reconstruction. Models such as these can be beneficial in decreasing model sizes, detecting anomalies, denoising, and making downstream tasks more interpretable.

Autoencoder models were trained on each dataset separately with the same activation and optimizer to ensure fair comparison. In this experiment, we compare results on latent dimension widths of 20, 60, 100, ..., 260 and $\lambda_1$ values in $\{0, 0.0001, 0.001, 0.01, 0.1, 1, 10\}$, Each autoencoder uses Adam optimizer, sigmoid activation, and MSE for base loss $L_\text{acc}$. \hlbreakable{We use early stopping to end training when the validation loss saturates for 7 epochs. Each model is trained 10 times with different random initializations to enable statistical comparisons.}

Use of the the dense entropy-based loss term results in convergence up to 4 times faster to minima within $10^{-2}$ as measured by validation reconstruction error (MSE), often with a lower MSE than models trained without entropy loss. \hlbreakable{A more detailed breakdown of the results are presented in Figures~\ref{fig:MNIST_AE_MSE}-\ref{fig:CIFAR_AE_SE} and the corresponding discussion below.}

For the first set of experiments, we use the MNIST benchmark dataset that includes 70,000 labeled images of handwritten digits from 0 to 9 with train/test split of 60,000/10,000 images. Each image is grayscale, contains a centered handwritten digit, and is 28x28 pixels (784 dimensions).


\begin{figure}[H]
    \centering
    \begin{subfigure}[c]{0.55\textwidth}
        \includegraphics[width=\textwidth]{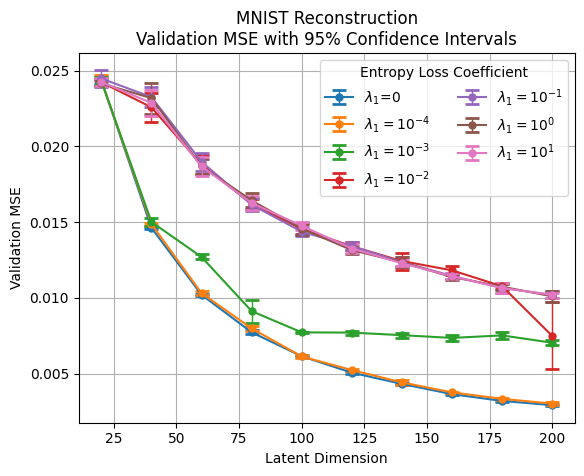}
        \caption{\hlbreakable{Validation MSE by Latent Dimension and $\lambda_1$}}
    \end{subfigure}%
    \hfill
    \begin{subfigure}[c]{0.42\textwidth}
        \caption*{\hlbreakable{$\quad\quad\quad\quad$Validaton MSE $t$-tests for different $\lambda_1$}}
        \begin{tabular}{c|c|*{7}{p{0.4cm}}|}
            \multicolumn{1}{c}{} & \multicolumn{8}{c}{$\lambda_{12}$} \\ 
            \cline{2-9}
            \multirow{10}{*}{\rotatebox{90}{$\lambda_{11}$}} & & $0$ & $10^{-4}$ & $10^{-3}$ & $10^{-2}$ & $10^{-1}$ & $10^{0}$ & $10^{1}$ \\
            \cline{2-9}
            & $0$ & & $-$ & $-$ & $-$ & $-$ & $-$ & $-$\\
            & $10^{-4}$ & $+$ & & $-$ & $-$ & $-$ & $-$ & $-$\\
            & $10^{-3}$ & $+$ & $+$ & & $-$ & $-$ & $-$ & $-$ \\
            & $10^{-2}$ & $+$ & $+$ & $+$ & & & &\\
            & $10^{-1}$ & $+$ & $+$ & $+$ & & & &\\
            & $10^{0}$  & $+$ & $+$ & $+$ & & & &\\
            & $10^1$ & $+$ & $+$ & $+$ & & & &\\
            \cline{2-9}
        \end{tabular}
        \caption{\hlbreakable{Each row has $t$-test results for the mean MSE for $\lambda_{11}$ compared to each $\lambda_{12}$ across 10 training replications for latent dimension 80.}\vspace{3cm}
        }
        \label{tab:table_mnist_ae_mse}
    \end{subfigure}
    \caption{\hlbreakable{MNIST Reconstruction MSE for Different Latent Dimensions and Dense Entropy Loss Coefficients $\lambda_1$}}
    \label{fig:MNIST_AE_MSE}
\end{figure}

\hlbreakable{First, Figure~\ref{fig:MNIST_AE_MSE} provides an analysis of the effects on validation MSE upon modifying the size of the latent dimension and hyperparameter $\lambda_1$ for the entropy loss on the first weight matrix of the autoencoder. Each curve in Figure~\ref{fig:MNIST_AE_MSE}(a) shows the mean validation MSE at convergence over 10 replications for each choice of latent dimension with 95\% $t$-confidence intervals. The individual plots correspond to different entropy loss weights $\lambda_1$. With a couple of exceptions, the converged validation MSE bifurcates into two groups as latent dimension increases. Lower values of $\lambda_1$ result in MSE around 0.003 while larger $\lambda_1$'s tend to converge to MSE near 0.010, both exceptionally small values indicating high-quality reconstruction. In addition, despite randomness in the training (i.e., the weight initialization and sampling by the Adam optimizer), the relatively small confidence intervals in almost all cases indicate convergence to consistent validation MSE scores across replications.}

\hlbreakable{Figure~\ref{fig:MNIST_AE_MSE}(b) shows a table of the results of $t$-tests for comparing the validation MSE for each pair of samples of training replications with $\lambda_1=\lambda_{11}$ and $\lambda_1=\lambda_{12}$. More explicitly, we perform the $t$-tests for the following null and alternative hypotheses:
\begin{align*}
    &H_0: \mu_{\text{Val MSE for }\lambda_{11}}=\mu_{\text{Val MSE for }\lambda_{12}}\\
    &H_1: \mu_{\text{Val MSE for }\lambda_{11}}\neq\mu_{\text{Val MSE for }\lambda_{12}}
\end{align*}
Each cell in the table has one of three indicators:
\begin{itemize}
    \item Empty cell: $p>0.01$, there is not a statistically significant difference in mean validation MSE at a $\alpha=0.01$ significance threshold
    \item Plus sign ($+$): $p<0.01$ indicating a statistically significant difference in mean validation MSE, and it is significantly \textit{higher} for $\lambda_{11}$ than for $\lambda_{12}$
    \item Minus sign ($-$): $p<0.01$ indicating a statistically significant difference in mean validation MSE, and it is significantly \textit{lower} for $\lambda_{11}$ than for $\lambda_{12}$
\end{itemize}
The table contains details at only latent dimension 80, but similar patterns are seen in almost all other latent dimensions. Broadly, we see lower values of $\lambda_1$ (0, $10^{-4}$, $10^{-3}$) lead to statistically significantly lower validation MSE, although the difference in MSE is on the order of $0.001$ across all cases.}

\hlbreakable{Below, Figure~\ref{fig:MNIST_AE_SE} compares the stopping epochs for each pair of latent dimension and $\lambda_1$ value. The stopping epoch is defined as the epoch when early stopping ends the training run due to saturated validation MSE for 7 epochs. The runtimes of each train epoch was nearly the same across $\lambda_1$ values for fixed latent dimension. Hence, stopping epoch is directly proportional to training time.}

\hlbreakable{In the graph Figure~\ref{fig:MNIST_AE_SE}(a), we see the stopping epochs are much lower for $\lambda_1\geq 10^{-4}$, especially for latent dimensions over 100. Mean stopping epoch is over 40 for the larger models without entropy loss ($\lambda_1=0$) and only 8-10 for $\lambda_1\geq 10^{-4}$, a reduction of 80\% training epochs, and hence training time. The statistical significance of these differences are reported in Figure~\ref{fig:MNIST_AE_SE}--an exact inverse of the significance patterns for MSE. We conclude that, while larger $\lambda_1$ values increase validation MSE marginally, they enable huge reductions in training time.}

\begin{figure}[H]
    \centering
    \begin{subfigure}[c]{0.55\textwidth}
        \includegraphics[width=\textwidth]{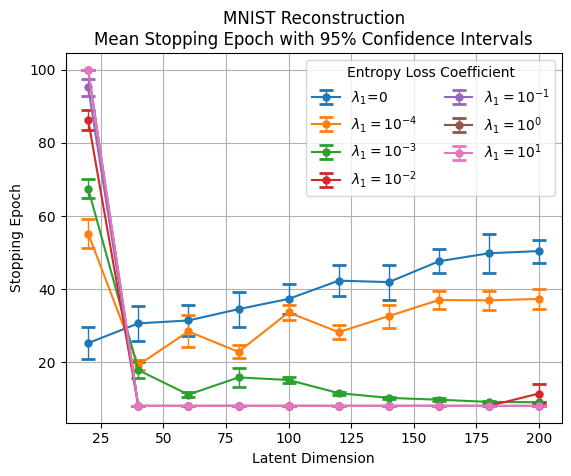}
        \caption{Stopping Epoch by Latent Dimension and $\lambda_1$}
    \end{subfigure}%
    \hfill
    \begin{subfigure}[c]{0.42\textwidth}
        \caption*{$\quad\quad\quad\quad$Stopping Epoch $t$-tests for different $\lambda_1$}
        \begin{tabular}{c|c|*{7}{p{0.4cm}}|}
            \multicolumn{1}{c}{} & \multicolumn{8}{c}{$\lambda_{12}$} \\ 
            \cline{2-9}
            \multirow{10}{*}{\rotatebox{90}{$\lambda_{11}$}} & & $0$ & $10^{-4}$ & $10^{-3}$ & $10^{-2}$ & $10^{-1}$ & $10^{0}$ & $10^{1}$ \\
            \cline{2-9}
            & $0$ & & $+$ & $+$ & $+$ & $+$ & $+$ & $+$\\
            & $10^{-4}$ & $-$ & & $+$ & $+$ & $+$ & $+$ & $+$\\
            & $10^{-3}$ & $-$ & $-$ & & $+$ & $+$ & $+$ & $+$\\
            & $10^{-2}$ & $-$ & $-$ & $-$ & & & & \\
            & $10^{-1}$ & $-$ & $-$ & $-$ & & & & \\
            & $10^{0}$  & $-$ & $-$ & $-$ & & & & \\
            & $10^{1}$  & $-$ & $-$ & $-$ & & & & \\
            \cline{2-9}
        \end{tabular}
                \caption{Each row has $t$-test results for the mean stopping epoch for $\lambda_{11}$ compared to each $\lambda_{12}$ across 10 training replications for latent dimension 180.\vspace{3cm}
        }
        \label{tab:table_mnist_ae_se}
    \end{subfigure}
    \caption{MNIST Stopping Epoch for Different Latent Dimensions and Entropy Loss Coefficients $\lambda_1$}
    \label{fig:MNIST_AE_SE}
\end{figure}

\hlbreakable{Next, Figures~\ref{fig:CIFAR_AE_MSE}-\ref{fig:CIFAR_AE_SE} provide a similar analysis on the CIFAR-10 dataset}, a benchmark dataset of 60,000 tiny color images belonging to 10 classes of objects, including cats and airplanes. Images are all RGB and 32x32 pixel format with a total dimension of 3072. Training on this higher dimensional dataset allows us to assess the implications of the input dimensionality compared to MNIST. 

\begin{figure}[H]
    \centering
    \begin{subfigure}[c]{0.55\textwidth}
        \includegraphics[width=\textwidth]{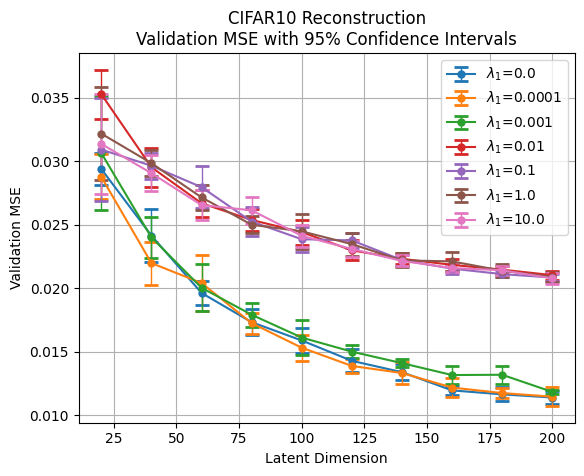}
        \caption{Validation MSE by Latent Dimension and $\lambda_1$}
    \end{subfigure}%
    \hfill
    \begin{subfigure}[c]{0.42\textwidth}
        \caption*{$\quad\quad\quad\quad$Validaton MSE $t$-tests for different $\lambda_1$}
        \begin{tabular}{c|c|*{7}{p{0.4cm}}|}
            \multicolumn{1}{c}{} & \multicolumn{8}{c}{$\lambda_{12}$} \\ 
            \cline{2-9}
            \multirow{10}{*}{\rotatebox{90}{$\lambda_{11}$}} & & $0$ & $10^{-4}$ & $10^{-3}$ & $10^{-2}$ & $10^{-1}$ & $10^{0}$ & $10^{1}$ \\
            \cline{2-9}
            & $0$       & & & & $-$ & $-$ & $-$ & $-$\\
            & $10^{-4}$ & & & & $-$ & $-$ & $-$ & $-$ \\
            & $10^{-3}$ & & & & $-$ & $-$ & $-$ & $-$ \\
            & $10^{-2}$ & $+$ & $+$ & $+$ & & & &\\
            & $10^{-1}$ & $+$ & $+$ & $+$ & & & &\\
            & $10^{0}$  & $+$ & $+$ & $+$ & & & &\\
            & $10^1$ & $+$ & $+$ & $+$ & & & &\\
            \cline{2-9}
        \end{tabular}
            \caption{Each row has $t$-test results for the mean validation MSE for $\lambda_{11}$ compared to each $\lambda_{12}$ across 10 training replications for latent dimension 80.\vspace{3cm}
            }
    \label{tab:table_cifar_ae_mse}
    \end{subfigure}
    \caption{CIFAR10 Reconstruction MSE for Different Latent Dimensions and Entropy Loss Coefficients $\lambda_1$}
    \label{fig:CIFAR_AE_MSE}
\end{figure}

\hlbreakable{Just like MNIST, Figure~\ref{fig:CIFAR_AE_MSE} demonstrates validation MSE for CIFAR-10 bifurcates as latent dimension grows into two groups: lower MSE for $\lambda_1\leq 0.001$ and marginally higher MSE for $\lambda_1\geq 0.01$. The two groups are not only visible in the graph Figure~\ref{fig:CIFAR_AE_MSE}(a) but actually perfectly align statistically, as we see in Figure~\ref{fig:CIFAR_AE_MSE}(b) for latent dimension 200. Models with $\lambda_1\leq 0.001$ do not have statistically significant differences in validation MSE with one another, but they are statistically significantly lower than models with $\lambda_1\geq 0.01$. Similarly, the latter group has statistically equivalent validation MSE.}

\begin{figure}[H]
    \centering
    \begin{subfigure}[c]{0.55\textwidth}
        \includegraphics[width=\textwidth]{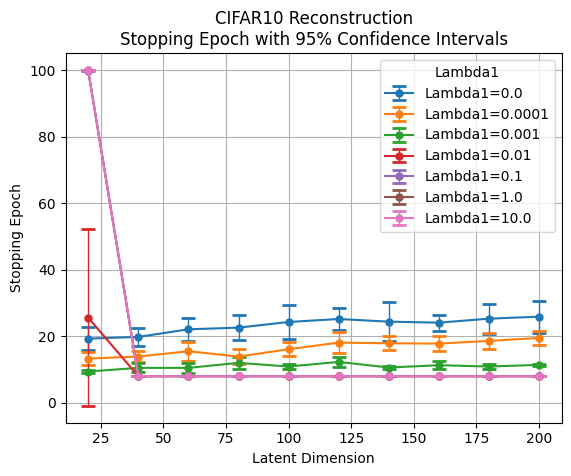}
        \caption{Stopping Epoch by Latent Dimension and $\lambda_1$}
    \end{subfigure}%
    \hfill
    \begin{subfigure}[c]{0.42\textwidth}
        \caption*{$\quad\quad\quad\quad$Stopping Epoch $t$-tests for different $\lambda_1$}
        \begin{tabular}{c|c|*{7}{p{0.4cm}}|}
            \multicolumn{1}{c}{} & \multicolumn{8}{c}{$\lambda_{12}$} \\ 
            \cline{2-9}
            \multirow{10}{*}{\rotatebox{90}{$\lambda_{11}$}} & & $0$ & $10^{-4}$ & $10^{-3}$ & $10^{-2}$ & $10^{-1}$ & $10^{0}$ & $10^{1}$ \\
            \cline{2-9}
            & $0$ & & $+$ & $+$ & $+$ & $+$ & $+$ & $+$\\
            & $10^{-4}$ & $-$ & & $+$ & $+$ & $+$ & $+$ & $+$\\
            & $10^{-3}$ & $-$ & $-$ & & $+$ & $+$ & $+$ & $+$\\
            & $10^{-2}$ & $-$ & $-$ & $-$ & & & & \\
            & $10^{-1}$ & $-$ & $-$ & $-$ & & & & \\
            & $10^{0}$  & $-$ & $-$ & $-$ & & & & \\
            & $10^{1}$  & $-$ & $-$ & $-$ & & & & \\
            \cline{2-9}
        \end{tabular}
        \caption{Each row has $t$-test results for the mean stopping epoch for $\lambda_{11}$ compared to each $\lambda_{12}$ across 10 training replications for latent dimension 80.\vspace{3cm}
            }
        \label{tab:table_cifar_ae_se}
    \end{subfigure}
    \caption{CIFAR10 Stopping Epoch for Different Latent Dimensions and Entropy Loss Coefficients $\lambda_1$}
    \label{fig:CIFAR_AE_SE}
\end{figure}

\hlbreakable{Lastly, Figure~\ref{fig:CIFAR_AE_SE} demonstrates a similar pattern as MNIST. Larger values $\lambda_1\geq 0.01$ reach early stopping in 8-10 epochs while an absence of entropy loss ($\lambda_1=0$) requires more than twice as many epochs as latent dimension grows.}

In experiments for both MNIST and CIFAR-10, we see \hlbreakable{statistically lower validation MSE scores for smaller coefficients $\lambda_1$ of the dense entropy loss applied to the first layer of the autoencoders, but these differences are small, on the order of 0.01.} However, use of the entropy loss results in remarkable speedups in training when the autoencoder has a latent dimension of 60 or more, the optimizer finds effective minima much more quickly during training.

\subsection{CNNs for Image Classification with Statistical Evaluation of Convolutional Entropy Loss}

Next, we perform image classification experiments on the benchmark dataset CIFAR-10 with the entropy-based loss function for convolutional layers, $L_{\text{conv}}$. Here, we explore the impact of the convolutional entropy-based loss term with different weighting hyperparameters $\lambda_2$ for small CNN classifiers of varying widths and depths, \hlbreakable{and validate the differences in performance statistically}.

CNNs in these experiments have 1-3 successive blocks containing 1 convolutional layer (filter size $3\times 3$, stride $1\times 1$) and 1 max pooling layer (size $2\times 2$, stride $2\times 2$) followed by a softmax classifier. Leaky ReLU activations are used with each convolutional layer. Widths of 32, 64, and 128 convolutional filters in each layer are tested with each depth. The CNNs use cross-entropy for the base classification loss $L_\text{acc}$ and are trained with the Adam optimizer. Each architecture is trained with varying $\lambda_2$ in $\{0, 0.0001, 0.001, 0.01, 0.1, 1, 10\}$, applied in only the first convolutional layer in one set of experiments and applied to all layers in another set. \hlbreakable{Ten training replications are run for models with each fixed set of hyperparameters to generate random samples for statistical analysis.}

The pattern in entropy changes in the high-quality VGG classifier observed in Figure~\ref{VGG_entropy_gain} prompted a hypothesis that encouraging entropy preservation in the early layers will have a positive impact on classification accuracy. The results of experiments with the entropy-based loss applied only to the first convolutional layer are shown in Table~\ref{CNN_classifier_table}.

\hlbreakable{For each model, we provide the mean base train and accuracies without entropy loss as well as the mean change in accuracies with the optimal hyperparameter $\lambda_2$. The mean change in accuracies is given in the last two columns. In each case, we perform one-tailed $t$-tests of the hypotheses
\begin{align*}
    &H_0: \mu_{\text{val accuracy with entropy loss}}\leq\mu_{\text{val accuracy without entropy loss}}\\
    &H_1: \mu_{\text{val accuracy with entropy loss}}>\mu_{\text{val accuracy without entropy loss}}
\end{align*}
for training accuracy in column 4 and for test accuracy in column 5. The training accuracy mostly do not exhibit statistically significant differences at the $\alpha=0.05$ significance level, with one exception. However, all validation accuracies have statistically significant increases with the convolutional entropy loss.}

\begin{table}[H]
    \centering
    \begin{tabular}{c|l|l|l|l}
        Architecture & Mean Base Train Acc. & Mean Base Val. Acc. & Mean $\Delta$ Train Acc. & Mean $\Delta$ Val. Acc. \\ \hline \hline
        $[32]$ & 0.7387 & 0.6402 & +0.0015 & +0.0052** (0.0035) \\
        $[64]$ & 0.7612 & 0.6452 & +0.0031 & +0.0077* (0.0155)\\
        $[128]$ & 0.7736 & 0.6479 & +0.0076 & +0.0074** (0.0016)\\ \hline
        $[32, 32]$ & 0.7444 & 0.6745 & +0.0081 & +0.0076** (0.0043)\\
        $[64, 64]$ & 0.7924 & 0.6954 & +0.0065 & +0.0052** (0.0065)\\
        $[128, 128]$ & 0.8447 & 0.7061 & $-0.0087$ & +0.0063* (0.0210)\\ \hline
        $[32, 32, 32]$ & 0.7441 & 0.6789 & +0.0125* (0.0179) & +0.0114*** (0.0004) \\
        $[64, 64, 64]$ & 0.8152 & 0.7026 & +0.0054 & +0.0064* (0.0130) \\
        $[128, 128, 128]$ & 0.8890 & 0.7098 & +0.0041 & +0.0053* (0.0151) \\
    \end{tabular}
    \vspace{.5cm}
    \caption{Experimental results for CIFAR10 classification with CNNs (10 replications). The mean base train and validation accuracy for $\lambda_2=0$ is displayed alongside the mean best gain in accuracies with $\lambda_2>0$ applied to the first convolutional layer}
    \label{CNN_classifier_table}
\end{table}

Interestingly, applying the entropy-based loss to all layers with the same weighting hyperparameters yields no significant gains, neither for train nor validation accuracy. This supports the hypothesis that encouraging entropy preservation in the early layers trains the CNN to extract higher-quality latent representations of the input data, enabling better downstream classification performance.

\hlbreakable{The same experiments were performed with MNIST, but it is such an easy dataset to classify that all models performed extremely well and converged quickly, so there was no interesting distinction upon using or not using the convolutional entropy loss. This reinforces the conclusion from the autoencoders indicating the need for a sufficiently high dimensionality to reap benefits from the entropy losses.}

\hlbreakable{\subsection{Large-scale CNNs for Classification with Convolutional Entropy Loss}}

\hlbreakable{Though CNN classifiers for tiny CIFAR-10 images experience reliable performance gains with the entropy loss, its impact is unclear for large CNNs trained to classify large, real-world images. To demonstrate the effects of entropy loss at scale, we train several large CNNs, including variations of VGG-16 developed by \citet{simonyan_very_2015} and ResNet-50 developed by \citet{he2016deep} for image classification. We classify the Imagenette dataset by \citet{Howard_Imagenette_2019}. Imagenette is a 10-class subset of the ImageNet Large Scale Visual Recognition Challenge (ILSVRC 2012) dataset from \citet{deng_imagenet_2009}. The color images range in dimensions in the dataset have an average image resolution of 469x387 pixels.}

\hlbreakable{Figure~\ref{VGG_entropy_gain} demonstrates well-trained VGG-16 classifiers preserve entropy in their early convolutional layers. Hence, we hypothesized promoting entropy in the early layers of VGG-16 during training would yield benefits for a newly-trained classifier, as shown in Figure~\ref{fig:block-diagram}. We ran experiments with entropy loss applied to the first convolutional layer, entropy loss applied to the first two convolutional layers, and a baseline without entropy loss for comparison.}

\begin{figure}
    \centering    \includegraphics[width=\textwidth]{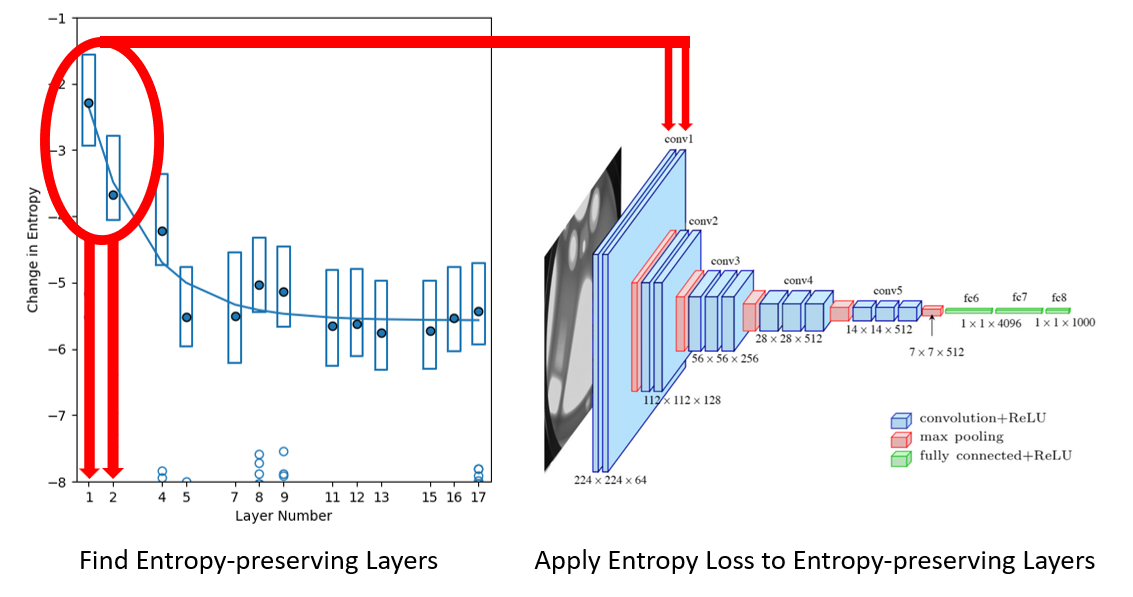}
    \caption{Strategy for architectures with known well-trained weights for a different task: Find the entropy-preserving layers and apply the entropy loss to encourage the same pattern when training for the desired dataset.}\label{fig:block-diagram}
\end{figure}

\hlbreakable{For ResNet, we again train baseline models without entropy loss, models with entropy loss applied to the first convolutional layer (layer 1), and models with entropy loss applied to the first two convolutional layer (layers 1 and 5). Additionally, ResNet has additive residual connections (or skip-connections) that add latent representations from prior layers to later layers. We hypothesize promoting entropy in hidden representations pushed forward by skip connections will preserve information in the added representation. Hence, we have a fourth test case where we apply the entropy loss in these locations, which we call a ``pre-skip'' entropy loss configuration. A fifth case is prompted by Figure~\ref{fig:resnet-entropy-gain}, where we promote entropy on the high-entropy layers below. We call this case the ``empirical`` entropy loss configuration.}

\begin{figure}
    \centering
    \includegraphics[width=\textwidth]{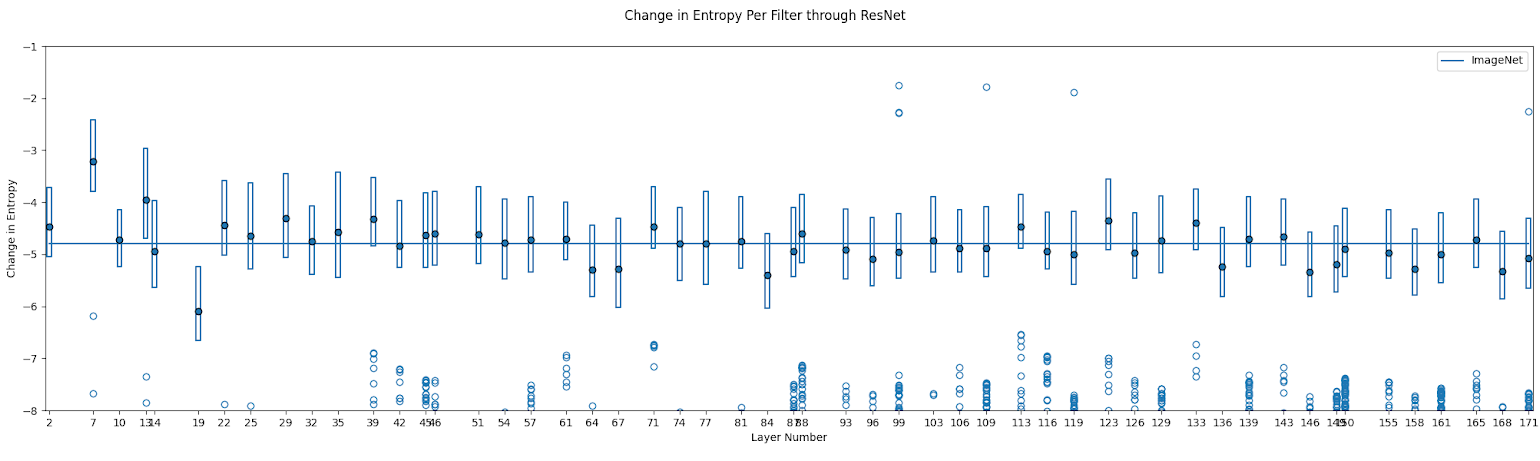}
    \caption{Average change in entropy per filter in each layer of a ResNet-50 network trained on the ImageNet dataset.}
    \label{fig:resnet-entropy-gain}
\end{figure}

\hlbreakable{For each fixed classifier network (VGG-16 or ResNet), training strategy (vanilla, augmentation, augmentation + dropout), and placement of convolutional entropy loss, we trained models with coefficients $\lambda_2\in\{0.0001, 0.001, 0.01, 0.1, 1, 10\}$. Stochastic gradient descent is used for all models with a learning rate of 0.01 which is divided by 10 upon 10 epochs with no reduction in validation accuracy. Early stopping is used when the validation accuracy experiences no improvements in 25 epochs. Augmentations include horizontal flips, up to 20\% random rotation, and up to 20\% random zoom. Dropout rate 0.5 is used between the dense layers in the heads of the models.}

\hlbreakable{The results are displayed in Table~\ref{tab:big_CNN_experiments} below. The validation accuracy is documented for each model with each fixed set of hyperparameters. The maximum validation accuracy for each model is bolded. In 17 of 18 experiments, the maximum validation accuracy occurs when the entropy loss is in effect, and mostly when $\lambda_2\geq 0.001$.}

\begin{table}
  \centering
  \caption{Validation Accuracy for ImageNette Classification by Model and Convolutional Entropy Loss Settings}
  \label{tab:big_CNN_experiments}
  \begin{tabular}{l|cc|p{1.7cm}|ccccccc}
    \toprule
    & & & & \multicolumn{7}{c}{Convolutional Entropy Loss Weight $\lambda_2$} \\
    \cmidrule{5-11}
     \multirow{2}{*}[4pt]{Model} & \multirow{2}{*}[4pt]{A} & \multirow{2}{*}[4pt]{D} & \multirow{2}{*}[10pt]{\shortstack[c]{Entropy \\ Loss Layers}} & $0$ & $0.0001$ & $0.001$ & $0.01$ & $0.1$ & $1$ & $10$ \\
    \midrule
    \multirow{6}{*}{VGG} & & & [1] & 0.698 & 0.752 & \textbf{0.756} & 0.734 & 0.716 & 0.738 & 0.736 \\
    & & & [1,2] & 0.726 & 0.720  & 0.720  & 0.726 & 0.718 & 0.730  & \textbf{0.750}  \\ \cmidrule{2-11}
    & $\checkmark$ & & [1] & 0.804 & 0.796 & \textbf{0.828} & 0.794 & 0.800   & 0.814 & 0.804 \\
    & $\checkmark$ & & [1,2] & 0.816 & 0.816 & 0.806 & \textbf{0.824} & 0.796 & 0.780  & 0.800   \\ \cmidrule{2-11}
    & $\checkmark$ & $\checkmark$ & [1] & 0.854 & 0.854 & \textbf{0.874} & 0.826 & 0.836 & 0.868 & 0.866 \\
    & $\checkmark$ & $\checkmark$ & [1,2] & 0.858 & 0.860  & 0.862 & 0.854 & \textbf{0.864} & 0.854 & 0.848 \\
    \midrule
    \multirow{12}{*}{ResNet} & & & [1] & \textbf{0.792} & 0.766 & 0.782 & 0.764 & 0.784 & 0.784 & 0.786 \\
    & & & [1,5] & 0.776 & 0.782 & \textbf{0.802} & 0.778 & 0.776 & 0.774 & 0.766 \\
    & & & Pre-skip & 0.748 & 0.766 & 0.812 & 0.770 & 0.786 & \textbf{0.814} & 0.778 \\
    & & & Empirical & 0.778 & 0.782 & 0.774 & 0.776 & 0.792 & \textbf{0.810} & 0.776 \\ \cmidrule{2-11}
    & $\checkmark$ & & [1] & 0.786 & 0.784 & 0.778 & 0.752 & \textbf{0.814} & 0.772 & 0.770  \\
    & $\checkmark$ & & [1,5] & 0.780  & 0.782 & \textbf{0.786} & 0.762 & 0.768 & 0.760  & 0.750  \\
    & $\checkmark$ & & Pre-skip & 0.780  & 0.794 & 0.760  & 0.774 & 0.766 & \textbf{0.808} & 0.792 \\
    & $\checkmark$ & & Empirical & 0.784 & \textbf{0.802} & 0.746 & 0.766 & 0.768 & 0.768 & 0.754 \\ \cmidrule{2-11}
    & $\checkmark$ & $\checkmark$ & [1] & 0.776 & 0.802 & 0.796 & 0.798 & \textbf{0.810}  & 0.792 & 0.796 \\
    & $\checkmark$ & $\checkmark$ & [1,5] & 0.776 & \textbf{0.812} & 0.798 & 0.806 & 0.776 & 0.810  & 0.796 \\
    & $\checkmark$ & $\checkmark$ & Pre-skip & 0.792 & 0.802 & 0.796 & 0.794 & \textbf{0.808} & 0.790  & 0.820  \\
    & $\checkmark$ & $\checkmark$ & Empirical & 0.804 & 0.806 & 0.806 & \textbf{0.824} & 0.804 &  0.800 & 0.798 \\
    \bottomrule
  \end{tabular}
\end{table}

\hlbreakable{In VGG, augmentation and dropout have positive impacts to validation accuracy. Keeping these constant, note applying the loss to layer [1] is better than layers [1,2] in all three model structures. This aligns with the first layer preserving the most entropy for VGG16 trained on ImageNet in Figure~\ref{VGG_entropy_gain}. In ResNet, there is no clear best choice for layers to apply the entropy loss, but the loss helps in all cases except the first, and the higheset overall accuracy for ResNet comes with the Empirical layers inspired by the pre-trained ResNet shown in Figure~\ref{fig:resnet-entropy-gain}.}

\hlbreakable{These experiments demonstrate the entropy loss improves the performance of large-scale CNN classifiers for images with realistic resolutions. In addition, the strategy of choosing convolutional layers that preserve entropy in classifiers with the same architecture well-trained for different datasets, as depicted in Figure~\ref{fig:block-diagram}.}

\hlbreakable{\subsection{U-Net for Image Segmentation}}

\hlbreakable{In previous experiments, the entropy change term was visualized in a well-trained network of the same architecture the entropy loss was to be implemented. Regardless of data, as long as the model was well trained for a specific task, the entropy patterns stayed the same for that specific architecture. In most cases, it was shown that in early layers of the networks, the entropy change was high which lead us to implement the entropy loss in those locations to ensure adequate entropy preservation. In this section, we use a significantly different, less straightforward architecture for another task: image segmentation.}

\hlbreakable{Image segmentation is a task that attempts both classification and localization of objects in an image. State of the art CNN architecture, U-Net by \citet{ronneberger2015Unet}, is widely used for performing segmentation. U-Net contains a downsampling path, an upsampling path, with skip connections between these two paths. The downsampling path extracts features while minimizing the size of the image as it moves forward. As it moves into the upsampling path, the data is reconstructed through upconvolutions and features discovered in the downsampling path through the skip connections. U-Net exploits this form of reconstruction which reduces the necessary amount of training data. However, since this model is quite small, it may also lose important information as the image moves through the architecture.}

\hlbreakable{Similary to previous experiments, the entropy change term was calculated through a well trained U-Net. In Figure~\ref{UNet_entropy_gain}, we see that the entropy change is also in a very distinct "U" shape. To follow the original ideal of exploiting the areas known to have high entropy, the entropy loss was implemented in the regions of large entropy change.}

\begin{figure}
    \centering
    \includegraphics[width=\textwidth]{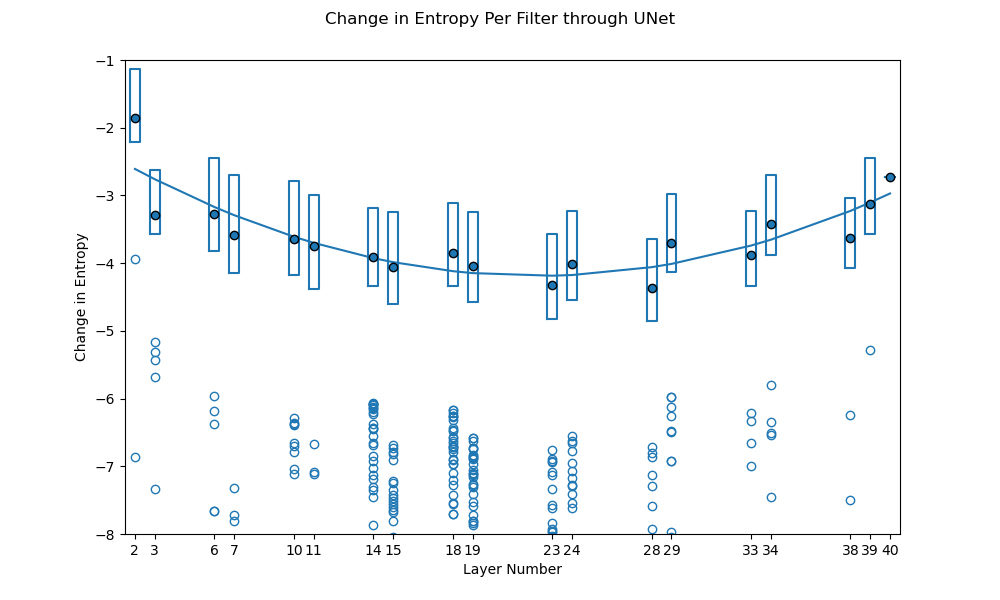}
    \caption{Average change in entropy per filter in each convolutional layer of a U-Net trained for semantic segmentation of the COCO dataset. The closed dots are means, box plots show first and third quartiles of entropy change per filter at each layer, and outliers are plotted as open dots.}
    \label{UNet_entropy_gain}
\end{figure}

\hlbreakable{A U-Net based on the original architecture from \citet{ronneberger2015Unet} was trained on COCO segmentation data, consisting of 60000/10000 training/validation 256x256 images, from \citet{lin_microsoft_2014}. The implementation from \citet{chopra_coco_unet} was adapted to utilize these larger images and the modified loss function. Models were trained at a variety of different layer locations and values of lambda to determine the most impactful placement of the entropy penalty for both accuracy and convergence time. Each model was trained utilizing early stopping with a minimum loss change of 0.01 for 10 training epochs. All hyperparameters outside of the compared $\lambda$'s, were kept the same and a seed was set for the initialization for an equal comparison. }

\hlbreakable{Initial U-Net experiments consisted of placing the entropy loss in multiple locations. Models were trained with the entropy loss applied at each of the following sets of layers:  [2], [2, 3], [3, 7, 11, 15], and [3, 7, 11, 15, 19, 24, 29, 34, 29]. These models often took approximately$\ 30\%$ longer than the baseline model. Additionally, accuracy was often not affected positively. For the case of implementing a entropy loss at layer 2, it likely caused a form of bottleneck, causing the model to take a long time to converge and adding limited valuable information later on.}

\hlbreakable{We saw the most impact in the layers with the highest amount of entropy change from the well trained network. Models with the entropy loss at layers $\{19,24,29,34,39\}$ were trained with varying $\lambda_2\in\{0, 0.0001, 0.001, 0.01, 0.1, 1,10\}$. These layers are feeding in to the up path concatenations which are likely feeding in higher quality information for the reconstruction and prediction. }

\begin{table}[H]
\centering 
\begin{tabular}{ccccccc} 
\toprule
Loss & $\lambda_2$ & $\#$ of Epochs & Training Time (s) & Accuracy & IOU\\ 
\midrule
MSE & 0.00 & 20 & 9276.82 & 0.8290 & 0.6693 \\
MSE + Entropy & 0.0001 & 24 & 13202.78 & \textbf{0.8336} & 0.6823  \\ 
MSE + Entropy & 0.001 & 21 & \textbf{8304.54} & \textbf{0.8316} & 0.6724\\
MSE + Entropy & 0.01 & \textbf{19} & \textbf{8590.74} & \textbf{0.8316} & 0.6602 \\
MSE + Entropy & 0.1 & 23 & \textbf{9038.20} & \textbf{0.8305} & 0.6626 \\
MSE + Entropy & 1.00 & 23 & 11374.05 & \textbf{0.8349} & 0.6734 \\
MSE + Entropy & 10.00 & 36 & 14358.95 & \textbf{0.8300} & 0.6786 \\
\bottomrule
\end{tabular}
\vspace{.5cm}
\caption{U-Net Experiments with Entropy Loss} 
\label{table:unet} 
\end{table}

\hlbreakable{A commonly used metric in determining quality of a segmentation model is intersection over union (IOU). IOU is a measure used to evaluate how well the model's predicted areas overlap with the true areas. The goal is to understand how accurately the model can predict the location of objects compared to the ground truth. Typically for image segmentation tasks, an IOU > 0.5 is acceptable, while values closer to 0.7 are sought after. }

\hlbreakable{The addition of the entropy loss in the same location of large entropy changes in a well trained network, allowed us to always achieve greater validation accuracy with the varied $\lambda$'s and nearly always improves the IOU. However, it is seen that there is a point in which we achieve approximately $\ 10\%$ increase in speed of convergence, while maintaining a small increase in accuracy. It is important to observe that once the value of $\lambda$ gets too large, it tends to overpower the base loss, forcing it to take much longer to converge. Similarly, with a $\lambda$ too small, it may take significantly more time due to varying loss changes. }

\begin{figure}[http]
    \centering
    \begin{tabular}{cccc}
        \includegraphics[width=0.208\textwidth]{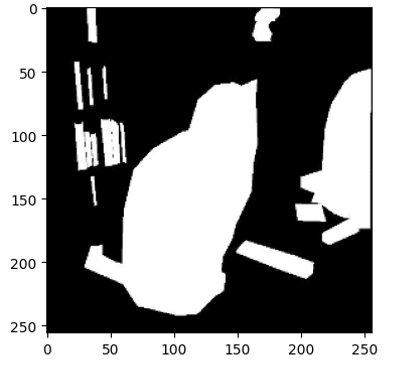} &
        \includegraphics[width=0.2\textwidth]{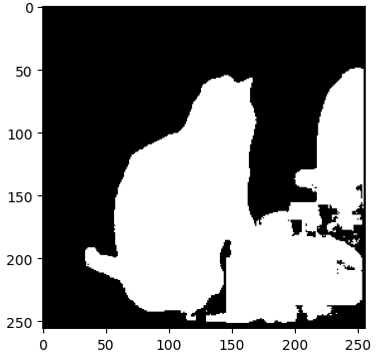} &
        \includegraphics[width=0.2\textwidth]{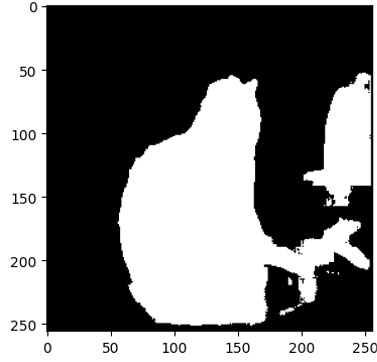} &
        \includegraphics[width=0.2\textwidth]{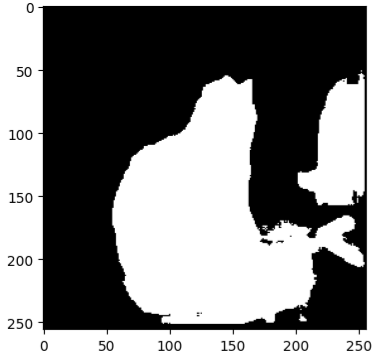} \\
        \textbf{True} & \textbf{$\lambda_2=0.000$} & \textbf{$\lambda_2=0.001$} & \textbf{$\lambda_2=0.010$} \\
    \end{tabular}
    \caption{Comparison of the binary masks produced utilizing different values of $\lambda_2$.}\label{fig:mask_comparison}
\end{figure}

\hlbreakable{To ensure the masks produced are truly higher quality, the predicted binary masks were compared to the true binary mask of the image. In Figure~\ref{fig:mask_comparison} above, the masks that are predicted when a model was trained using the entropy loss are significantly better at separating the objects in the image. For example, the true mask shows multiple small objects next to the shape of a cat, however, the model trained without the entropy loss cannot separate any of those items, while the $\lambda_2=0.001$ manages to separate them quite a bit more. }

\section{Conclusion}
This article addresses the complexity of the decision making process of neural networks by utilizing information theory. We derived novel information theoretic formulas that measure the entropy propagation through dense and 2D convolutional layers, These results provided a foundation for entropy-based loss terms that allow us to guide and regularize the neural training process.

With our loss, we first analyzed the performance of networks with these loss terms by performing experiments on image compression and image classification tasks on small benchmark datasets MNIST and CIFAR10. \hlbreakable{Specifically, the incorporation of the $L_\text{dense}(\mathcal{W})$ term in an autoencoder for image compression showed as much as a 4x increase in convergence speed and converged to nearly equally good minima.} Additionally, performance gains are demonstrated on CIFAR-10 image classification with CNNs utilizing the $L_\text{conv}(\mathcal{C})$ term, and we confirm the hypothesis that encouraging entropy preservation in early layers promotes better generalization with statistically significantly better validation accuracies.

\hlbreakable{We then scaled up to large-scale experiments on larger datasets and neural architectures. We first visualized the entropy change patterns within well-trained CNNs VGG, ResNet-50, and UNet to see how entropy changes in each convolutional layer, and discovered these exhibit an ideal information flow patterns that are consistent across datasets: VGG preserves entropy early, ResNet preserves entropy before skip connections, and UNet preserves entropy at the beginning and the end.}

\hlbreakable{Our experiments included classification of ImageNette using large VGG-16 and ResNet-50 CNNs and image segmentation of the COCO dataset with UNet. In classification, VGG-16 and ResNet-50 models with the the convolutional entropy-based loss consistently outperformed versions without the entropy loss by 2-3\%. In all cases, best results are obtained by promoting entropy as suggested by pretrained models, as in Figure~\ref{fig:block-diagram}. With image segmentation, UNet trains 10\% faster when ideal entropy patterns are encouraged with equivalent accuracy.}

This work provides strong foundational findings that allow practical information-theoretic guidance of neural networks.  It can allow theory-backed, principled construction of neural architectures, specifically in terms of depth, width, and layer structure by analyzing entropy patterns and promoting optimal patterns.

Related works have often searched for similar methods, but have been focused on the estimation of entropy. This often leads to highly complex calculations that can be difficult for many practical use cases. Our work avoids these difficult estimations of entropy, and focuses instead on the more easily computable change in entropy, localized to specific layers and channels of the network. This calculation is cheap in comparison, and can be used in larger practical models.

\hlbreakable{\subsection{Directions for Future Work}}

\hlbreakable{The methods and findings herein prompt work in several directions. Our approach provides an opportunity to improve existing MLPs and CNNs by simply adding our entropy-based loss, studying entropy flow through different neural architectures for different tasks, and for making neural networks more efficient, accurate, and interpretable.}

\hlbreakable{This paper considered only dense and convolutional neural structures with skip-connections, and the feature extraction by weight matrices and convolutional filters. Similar entropy-based analysis could be done to derive formulas for entropy propagation through additional architectural structures. Extending the entropy change results to activation functions would provide understanding of information flow through the full forward pass in MLPs and CNNs. Extensions to attention mechanisms stand to extend such analysis and guidance to transformers like state-of-the-art large language models and vision transformers. Extensions to recurrent models could find entropy change patterns within long short-term memory (LSTM) units or gated recurrent units (GRU) for use in models commonly used for temporal data.}

\hlbreakable{In addition, the paper was limited to computer vision tasks of image compression, classification, and segmentation. Our approach should trivially extend to other vision tasks like object detection or 3D rendering with neural radiance fields or related methods. Further, the same ideas can be used outside computer vision, anywhere dense or convolutional networks are used: from the dense layers of transformers to convolutions applied to Mel-spectrograms in audio signals analysis to generative models, e.g. for dense or convolutional GANs or the UNets in diffusion models.}

\hlbreakable{Lastly, analysis of information flow to extract interpretable patterns has the potential to provide numerous benefits. Locating unhelpful parts of a neural architecture can enable model compression or pruning. Applying losses to encourage diverse information flow patterns has the potential to diversify learners in an ensemble. Studies of the relationship between information flow and model decisions may lead to a deeper understanding of the decision-making processes within neural networks, which is important for human trust and can help with identifying and removing bias.}

\section*{Acknowledgement}
The authors would like to thank Olivia Raney for her invaluable editorial improvements to this article. M. Meni would also like to thank Dr. Kaleb Smith for his willingness to share his knowledge and provide thoughtful recommendations to overcome challenges. R. T. White wishes to thank the NVIDIA Applied Research Accelerator Program for providing hardware support for this effort.

\section*{Disclosure}

Opinions, interpretations, conclusions, and recommendations are those of the author(s) and are not necessarily endorsed by the U.S. Army.

\bibliographystyle{plainnat}
\bibliography{entropy}

\end{document}